\begin{document}
%
\title{Smoothing Graphons for Modelling Exchangeable Relational Data}
%
%
%
\author{Yaqiong~Li$^*$, 
        Xuhui~Fan$^*$, 
        Ling~Chen,
        Bin~Li, 
        and~Scott~A.~Sisson~
\thanks{Yaqiong Li and Xuhui Fan are co-first authors.}
\thanks{Yaqiong Li and Ling Chen are with the Centre for Artificial Intelligence, Faculty of Engineering and Information Technology, University of Technology Sydney, Ultimo, NSW 2007, Australia.}
\thanks{Xuhui Fan and Scott A.~Sisson are based at the School of Mathematics and Statistics, University of New South Wales, Sydney, NSW 2052, Australia.}
\thanks{Bin Li is with the School of Computer Science, Fudan University, Shanghai, China.}
}

\markboth{IEEE transactions on neural networks and learning systems,~Vol.~XX, No.~X, XXX~XXXX}%
{Shell \MakeLowercase{\textit{et al.}}: Bare Demo of IEEEtran.cls for Journals}
%



\maketitle

\renewcommand{\algorithmicrequire}{\textbf{Input:}}
\renewcommand{\algorithmicensure}{\textbf{Output:}}
\newtheorem{property}{Property}
\newtheorem{lemma}{Lemma}

\newtheorem{theorem}{Theorem}
\newtheorem{proposition}{Proposition}
\begin{abstract}
Modelling exchangeable relational data can be 
described by \textit{graphon theory}. Most  Bayesian methods for modelling exchangeable relational data can be attributed to this framework by exploiting different forms of graphons. However, the graphons adopted by existing Bayesian methods are either piecewise-constant functions, 
which are insufficiently flexible for accurate modelling of the relational data, 
or are complicated continuous functions, which incur heavy computational costs for inference. In this work, we introduce a smoothing procedure to  piecewise-constant graphons to form {\em smoothing graphons}, which 
permit continuous intensity values for describing relations, but without impractically increasing computational costs.
In particular, we focus on the Bayesian Stochastic Block Model (SBM) 
and demonstrate how to adapt the piecewise-constant SBM graphon to the smoothed version.
We initially propose the Integrated Smoothing Graphon (ISG) which introduces one smoothing parameter to the SBM graphon to generate continuous relational intensity values.
We then develop the Latent Feature Smoothing Graphon (LFSG), which improves on the ISG by introducing auxiliary hidden labels to decompose the calculation of the ISG intensity and enable efficient inference.  Experimental results on real-world data sets validate the advantages of applying smoothing strategies to the Stochastic Block Model,
demonstrating that smoothing graphons can greatly improve AUC and precision for link prediction
without increasing computational complexity.

\end{abstract}

\begin{IEEEkeywords}
Bayesian inference; Exchangeable relational data; Graphon; MCMC; Smoothing techniques.
\end{IEEEkeywords}

%
\IEEEpeerreviewmaketitle

\section{Introduction}
Exchangeable relational data~\cite{nowicki2001estimation,ishiguro2010dynamic,nonpa2013schmidt}, such as tensor data~\cite{tnnls_8497036,pensky2019dynamic} and collaborative filtering data~\cite{tnnls_7112169,tnnls_8525418,Li_transfer_2009}, are commonly observed in many real-world applications. In general, exchangeable relational data describe the relationship between two or more nodes~(e.g.~friendship linkages in social networks; user-item rating matrices in recommendation systems; 
and protein-to-protein interactions in computational biology), where exchangeability refers to the phenomenon that the joint distribution over all observed relations remains invariant under node permutations. 
Techniques for modelling exchangeable relational data 
include node partitioning to form ``homogeneous blocks"~\cite{nowicki2001estimation,kemp2006learning,roy2009mondrian}, graph embedding methods to generate low-dimensional representations~\cite{tnnls_6508899,tnnls_6842607,tnnls_8587135}, and optimization strategies to minimize prediction errors~\cite{tnnls_6208890,tnnls_8440680}. 

\textit{Graphon theory}~\cite{orbanz2009construction, orbanz2014bayesian,RandomFunPriorsExchArrays} has recently recently been proposed as a unified theoretical framework for modelling exchangeable relational data. In graphon theory, each relation from a node $i$ to another node $j$ is represented by an \textit{intensity} value generated by a \textit{graphon function}, which maps from the corresponding coordinates of the node pair in a unit square, $(u_{i}^{(1)}, u_{j}^{(2)})$, to an intensity value in a unit interval. Many existing Bayesian methods for modelling exchangeable relational data can be described using graphon theory with various graphon functions~(see Fig.~\ref{fig:graphon_comparison}). Representative models include the Stochastic Block Model (SBM)~\cite{nowicki2001estimation,kemp2006learning}, the Mondrian Process Relational Model (MP-RM)~\cite{roy2009mondrian},  the Rectangular Tiling Process Relational Model (RTP-RM)~\cite{nakano2014rectangular}, 
the Rectangular Bounding Process Relational Model~(RBP-RM)~\cite{NIPS2018_RBP} and the Gaussian Process Prior Relational Model~(GP-RM)~\cite{orbanz2009construction}. 

These existing models can be broadly classified into two categories. The first category, which includes the SBM, MP-RM, RTP-RM and RBP-RM models, uses node-partitioning strategies to construct the relational model. 
By partitioning the set of nodes into groups along node co-ordinate margins, blocks can be constructed from these marginal groups that partition the full-dimensional co-ordinate space according to a given construction method (Fig.~\ref{fig:graphon_comparison}).
These models then assume that the relation intensity for node pairs is constant within each block. That is, the graphon function that generates intensity values over node co-ordinate space is constructed in a piecewise-constant manner. However, such piecewise-constant graphons can only provide limited modelling flexibility with a fixed and constant number of intensity values~(e.g.~equivalent to the number of blocks). 
As a result, they are restricted in their ability to model the ground-truth well.
The second category of relational models, which includes the GP-RM, 
aims to address this limitation as the graphon function can provide continuous intensity values. 
However, the computational complexity for estimating this graphon function is proportional to the cube of the number of nodes, which makes it practically non-viable for medium or large sized datasets. 


In this paper, we propose to apply a smoothing procedure to  piecewise-constant graphons to form \textit{smoothing graphons}, which will naturally permit  continuous intensity values for describing relations without impractically increasing  computational costs. As the Stochastic Block Model is one of the most popular Bayesian methods for modelling exchangeable relational data, we focus on developing  smoothing strategies within the piecewise-constant SBM graphon framework. In particular, we develop two variant smoothing strategies for the SBM: the Integrated Smoothing Graphon~(ISG) and the Latent Feature Smoothing Graphon~(LFSG). 

\begin{figure*}[ht]
\centering
\includegraphics[width = 0.9\textwidth]{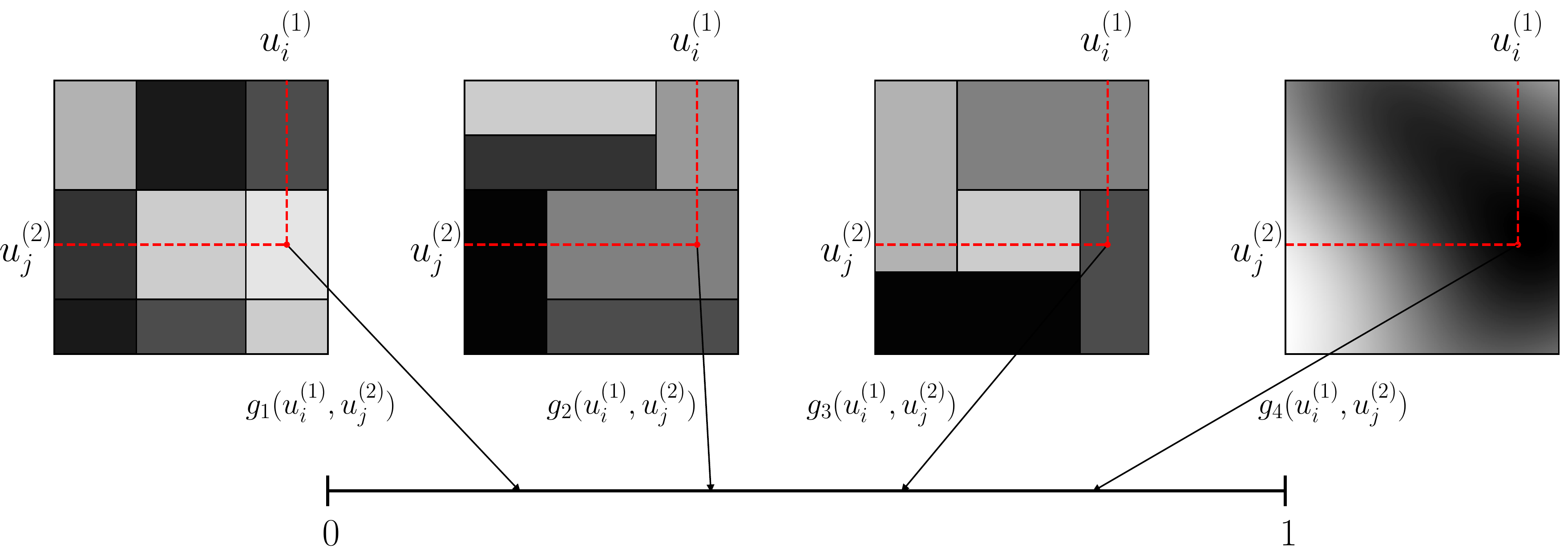}
\caption{Visualisation of Bayesian graphon-construction methods for modeling exchangeable relational data. 
From left to right: the Stochastic Block Model~(SBM); the Mondrian Process Relational Model~(MP-RM); the Rectangular Tiling Process Relational Model~(RTP-RM) and the Gaussian Process Prior Relational Model~(GP-RM). For any pair of node coordinates $(u_{i}^{(1)}, u_{j}^{(2)})$  the relation intensity is mapped from the unit square to a unit interval using a graphon 
function~(denoted $g_1,\ldots,g_4$), 
where a darker colour observed in the unit square represents a higher mapped intensity in the unit interval.}
\label{fig:graphon_comparison}
\end{figure*}

\begin{itemize} 
    \item ISG: In contrast to existing piecewise-constant graphons, which determine the intensity value based only on the  block within which a node pair resides, 
    the ISG alternatively calculates a mixture intensity for each pair of nodes by taking into account the intensities of all other blocks. The resulting mixture graphon function is constructed so that its output values are continuous. 
    \item LFSG: This strategy introduces auxiliary pairwise hidden labels to decompose the calculation of the mixture intensity used in the ISG, in order to enables efficient inference. 
    In addition, the introduction of these labels allows each node to belong to multiple groups in each dimension (e.g.~a user may interact with different people by playing different roles in a social network), which provides more modelling flexibility compared with the ISG (and existing piecewise-graphons) where each node is assigned to one group only. 
\end{itemize}


Note that while we develop the ISG and LFSG for SBM-based graphons, our smoothing approach can easily be applied to other piecewise-constant graphons. The main contributions of our work are summarised as follows:
\begin{itemize}
    \item We identify the key limitation of existing piecewise-constant graphons and develop a smoothing strategy to flexibly generate continuous graphon intensity values, which may better reflect the reality of a process.
    \item We develop the ISG smoothing strategy for the SBM to demonstrate how piecewise-constant graphons can be converted into smoothing graphons. 
    \item We improve on the ISG by devising the LFSG, which achieves the same objective of generating continuous intensity values but without sacrificing computation efficiency. Compared with the ISG where each node belongs to only one group, the LFSG allows each node to belong to multiple groups (e.g.~so that the node plays different roles in different relations), and thereby also providing a probabilistic interpretation of node groups.
    \item We evaluate the performance of our methods on the task of link prediction by comparing with the SBM and other benchmark methods. The experimental results clearly show that the smoothing graphons can achieve significant performance improvement over piecewise-constant graphons. 
\end{itemize}

\section{Preliminaries}
\subsection{Graphon theory}
The Aldous--Hoover theorem~\cite{hoover1979relations,aldous1981representations} provides the theoretical foundation for modelling exchangeable multi-dimensional arrays~(i.e.~exchangeable relational data) conditioned on a stochastic partition model. A random $2$-dimensional array is called {\em separately exchangeable} if its distribution is invariant under separate permutations of rows and columns.
\begin{theorem}~\cite{orbanz2014bayesian, RandomFunPriorsExchArrays}: \label{theoremgraphon}
  A random array $(R_{ij})$ is separately exchangeable if and only if it can be represented as follows: there exists a random measurable function $F:[0,1]^3 \mapsto \mathcal{X}$ such that $(R_{ij}) \overset{d}{=} \left(F(u_i^{(1)}, u_j^{(2)}, \nu_{ij})\right)$, where $\{u_i^{(1)}\}_i, \{u_j^{(2)}\}_j$ and $\{\nu_{ij}\}_{i,j}$ are two sequences and an array of i.i.d.~uniform random variables in $[0,1]$, respectively.
\end{theorem}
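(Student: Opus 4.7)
The plan is to prove both directions of the characterization. The sufficiency direction is direct: if $R_{ij} \stackrel{d}{=} F(u_i^{(1)}, u_j^{(2)}, \nu_{ij})$ with iid uniform inputs, then for any finite row/column permutations $\pi_1, \pi_2$, substituting $(\pi_1(i), \pi_2(j))$ for $(i,j)$ merely re-indexes the iid families $\{u_i^{(1)}\}_i$, $\{u_j^{(2)}\}_j$, and $\{\nu_{ij}\}_{i,j}$, whose joint law is invariant under such re-indexing. Conditioning on $F$ gives equality in distribution of the two arrays, and integrating out $F$ preserves this equality.

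For the necessity direction, the plan is to extract the three latent uniform families from the exchangeable structure inductively. First I would condition on the tail sigma-algebra generated by the whole array, and use a two-parameter generalization of de Finetti's theorem (the row-exchangeability of the array, given the column structure) to show that rows become conditionally iid exchangeable sequences. The resulting row-level directing random measure can be encoded as a single uniform $[0,1]$ variable $u_i^{(1)}$ via a Borel isomorphism between any standard Borel space and $[0,1]$. Symmetrically I would extract column directing measures as $u_j^{(2)}$. Conditional on all $(u_i^{(1)}, u_j^{(2)})$, the cells $R_{ij}$ become mutually independent with prescribed conditional laws; applying the quantile (inverse-CDF) transform then writes each cell as a measurable function of an independent uniform noise $\nu_{ij}$. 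Gluing these cell-wise measurable maps together yields the global function $F$.

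The main technical obstacle is the necessity direction, in particular justifying that (i) the row-level and column-level latent variables can be chosen jointly measurable and mutually independent, and (ii) conditional on both of them the remaining cell randomness decouples into independent pieces. This requires the full two-dimensional separate exchangeability hypothesis, the ergodic decomposition of exchangeable arrays, and careful use of regular conditional distributions on standard Borel spaces. Because the result is a classical theorem with a long but well-documented proof, I would not redo the measure-theoretic construction in detail; rather, I would cite the standard reference (e.g.\ Kallenberg's treatment of probabilistic symmetries) for the technical lemmas, and emphasize only how each layer of exchangeability in the array is packaged into one of the three iid uniform input families $\{u_i^{(1)}\}, \{u_j^{(2)}\}, \{\nu_{ij}\}$ that appear in the representation.
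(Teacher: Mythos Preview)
The paper does not prove this statement at all: Theorem~\ref{theoremgraphon} is stated purely as background, with the proof delegated entirely to the cited references~\cite{orbanz2014bayesian,RandomFunPriorsExchArrays} (the Aldous--Hoover representation). There is therefore no ``paper's own proof'' to compare your proposal against.

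That said, your outline is a reasonable high-level sketch of the classical argument. The sufficiency direction is correct and complete as written. For necessity, your plan---extract row-directing and column-directing random measures via de~Finetti-type conditioning, encode them as uniforms through Borel isomorphism, then use inverse-CDF to produce the cell noise---is the right architecture, and your identification of the two delicate points (joint measurability/independence of the row and column latents, and the conditional decoupling of cells) is accurate. If you were actually writing this up, you would need to be more explicit about \emph{which} tail $\sigma$-algebra you condition on (the shell $\sigma$-field, not merely the row or column tails separately) and about why the row and column latents can be taken independent of each other; these are exactly the steps where the original Aldous and Hoover proofs, and Kallenberg's cleaner treatment, do nontrivial work. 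Your decision to cite Kallenberg for those lemmas rather than reprove them is the same choice the present paper makes, only one level up: the paper cites the whole theorem.
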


Many existing Bayesian methods for modelling exchangeable relational data can be represented as in Theorem \ref{theoremgraphon}, using specific forms of the mapping function $F$. For instance, as illustrated in Fig.~\ref{fig:graphon_comparison}, given the uniformly distributed node coordinates $(u_i^{(1)}, u_j^{(2)})$, the SBM corresponds to a regular-grid constant graphon; the MP-RM uses a $k$-d tree structured constant graphon; the RTP-RM adopts an arbitrary rectangle constant graphon; and the GP-RM induces a continuous $2$-dimensional function. While taking different forms, these graphon functions commonly map from pairs of node coordinates in a unit square to intensity values in a unit interval. As shown in Fig.~\ref{fig:graphon_comparison}, the darker colour at the pair of node coordinates the higher intensity in the interval, which corresponds to a larger probability of observing or generating the relation between the pair of nodes. 

\subsection{Piecewise-constant graphons and their limitations}
Many alternative piecewise-constant graphons can be implemented to model exchangeable relational data $R$, where $R$ is a binary adjacency matrix which can be either directed~(asymmetric) or undirected~(symmetric). Here we consider the more complicated situation where $R$ is a $n\times n$ asymmetric matrix with $R_{ji}\neq R_{ij}$ (the extension of our method to the symmetric case is straightforward). For any two nodes in $R$, if node $i$ is related to node $j$ then $R_{ij}=1$, otherwise $R_{ij}=0$.

We take the SBM as an illustrative example.
In a two-dimensional SBM, there are two  distributions generating the groups, $\pmb{\theta}^{(1)},\pmb{\theta}^{(2)}\sim\text{Dirichlet}(\pmb{\alpha}_{1\times K})$, where $K$ is the number of groups and $\pmb{\alpha}_{1\times K}$ is the $K$-vector concentration parameter.
Each node  $i\in \{1, \ldots, n\}$ is associated with two hidden labels $z_i^{(1)}, z_i^{(2)}\in\{1, \ldots, K\}$, and $\{z_i^{(1)}\}_{i}\sim\text{Categorical}(\pmb{\theta}^{(1)}), \{z_i^{(2)}\}_{i}\sim\text{Categorical}(\pmb{\theta}^{(2)})$ for $i=1,\ldots,n$. Hence, $z_i^{(1)}$ and $z_i^{(2)}$ denote the particular groups that node $i$ belongs to in  two dimensions, respectively (that is, $z_i^{(1)}$ is the group of node $i$ when $i$ links to other nodes, and  $z_i^{(2)}$ is the group of node $i$ when other nodes link to it). The relation $R_{ij}$ from node $i$ to node $j$ is then generated based on the interaction between their respective groups $z_i^{(1)}$ and $z_j^{(2)}$.  

Let  $\pmb B$ be a $K\times K$ matrix, where each entry $B_{k_1, k_2}\in[0, 1]$ denotes the probability of generating a relation from group $k_1$ in the first dimension to group $k_2$ in the second dimension. For $k_{1},k_{2}=1, \ldots, K, B_{k_{1},k_{2}}\sim\text{Beta}(\alpha_0, \beta_0)$, where $\alpha_0, \beta_0$ are hyper-parameters for $\{B_{k_1,k_2}\}_{k_1, k_2}$. That is, we have  $P(R_{ij}=1|z_i^{(1)}, z_j^{(2)}, \pmb B)=B_{z_i^{(1)}, z_j^{(2)}}$. 

Now consider the SBM from the graphon perspective (Fig.~\ref{fig:graphon_comparison}; left). Let $\pmb{\theta}^{(1)},\pmb{\theta}^{(2)}$ be group (or segment) distributions of the two dimensions in a unit square respectively. The generation of hidden labels $z_i^{(1)}$ for node $i$ and $z_j^{(2)}$ for node $j$ proceeds as follows:  Uniform random variables $u_i^{(1)}$ and $u_j^{(2)}$ are respectively generated in the first and second dimensions. Then, $z_i^{(1)}$ and $z_j^{(2)}$ can be determined by checking in which particular segments of $\pmb{\theta}^{(1)}$ and $\pmb{\theta}^{(2)}$, $u_i^{(1)}$ and $u_j^{(2)}$  are respectively located. Formally, we have:
\begin{align} \label{eq:SBM_partition_geneartion}
& \pmb{\theta}^{(1)}, \pmb{\theta}^{(2)}\sim\text{Dirichlet}(\pmb{\alpha}_{1\times K}),\:\: \:\:u_i^{(1)}, u_j^{(2)}\sim\text{Unif}[0, 1] \nonumber \\
&z_i^{(1)}=(\pmb{\theta}^{(1)})^{-1}(u_i^{(1)}),\:\: \:\: z_j^{(2)}=(\pmb{\theta}^{(2)})^{-1}(u_j^{(2)}),
\end{align}
where $(\pmb{\theta}^{(1)})^{-1}(u_i^{(1)})$ and $(\pmb{\theta}^{(2)})^{-1}(u_j^{(2)})$ respectively map $u_i^{(1)}$ and $u_j^{(2)}$ to particular segments of $\pmb{\theta}^{(1)}$ and $\pmb{\theta}^{(2)}$.

A regular-grid partition~($\boxplus$) can be formed in the unit square by combining the segment distributions $\pmb{\theta}^{(1)}, \pmb{\theta}^{(2)}$ in two dimensions. Each block in this regular-grid partition is presented in a rectangular shape. Let $L_{k}^{(1)}=\sum_{k'=1}^k{{\theta}}_{k'}^{(1)}$ {and $L_{k}^{(2)}=\sum_{k'=1}^k{{\theta}}_{k'}^{(2)}$ be the accumulated sum of the first $k$ elements of $\pmb{\theta}^{(1)}$ and $\pmb{\theta}^{(2)}$ respectively~(w.l.o.g.~$L_{0}^{(1)}=L_{0}^{(2)}=0, L_{K}^{(1)}=L_{K}^{(2)}=1$).}  Use $\Box_{k_1,k_2}=[L^{(1)}_{k_1-1}, L^{(1)}_{k_1}]\times [L^{(2)}_{k_2-1}, L^{(2)}_{k_2}]$ to represent the $(k_1, k_2)$-th block in the unit square of $[0,1]^2$, such that 
$\bigcup_{k_1, k_2}\Box_{k_1,k_2}=[0,1]^2$. Then, an intensity function defined on the pair $(u_i, u_j)$ can be obtained by the piecewise-constant graphon function 
\begin{align} \label{SBM_graphon_intensity_function}
g\left(u_i^{(1)}, u_j^{(2)}\right) = \sum_{k_1, k_2} \pmb{1}{((u_i^{(1)},u_j^{(2)})\in \Box_{k_1,k_2})} \cdot B_{k_1,k_2}
\end{align}
where $\pmb{1}(A)=1$ if $A$ is true and $0$ otherwise, and
where $B_{k_1, k_2}\in[0, 1]$ is the intensity of the $(k_1, k_2)$-th block. We term \eqref{SBM_graphon_intensity_function} the
SBM-graphon. Thus, the generative process of the SBM-graphon can be described as:
\begin{enumerate}
 \item For $k_{1},k_{2}=1, \ldots, K$, generate $B_{k_{1},k_{2}}\sim\text{Beta}(\alpha_0, \beta_0)$, where $\alpha_0, \beta_0$ are hyper-parameters;
\item Generate the segment distributions $\pmb{\theta}^{(1)}, \pmb{\theta}^{(2)}$ via Eq.~(\ref{eq:SBM_partition_geneartion}) and form the partition~($\boxplus$) according to combinations of $\pmb{\theta}^{(1)}, \pmb{\theta}^{(2)}$ in the unit square;
 \item Uniformly generate the $1^{st}$ dimension coordinates $\{u_i^{(1)}\}_{i=1}^n$ and the $2^{nd}$ dimension coordinates $\{u_i^{(2)}\}_{i=1}^n$ for all nodes;
 \item For $i,j=1, \ldots, n$
    \begin{enumerate}
        \item Calculate the intensity $g(u_i^{(1)}, u_j^{(2)})$ according to Eq.~(\ref{SBM_graphon_intensity_function}) based on the node coordinates $(u_i^{(1)}, u_j^{(2)})$;
        \item Generate $R_{ij}\sim\mbox{Bernoulli}(g(u_i^{(1)}, u_j^{(2)}))$.
    \end{enumerate}
\end{enumerate}

Alternatively, if considering the latent labels~($z_i^{(1)}, z_j^{(2)}$) for nodes $i$ and $j$, then from Eq.~(\ref{eq:SBM_partition_geneartion}), step 4) can also be written as
\begin{enumerate}
  \setcounter{enumi}{3}
  \item For $i, j=1, \cdots, n$, 
    \begin{enumerate}
        \item  Generate the latent labels~($z_i^{(1)}, z_j^{(2)}$) via Eq.~(\ref{eq:SBM_partition_geneartion});
        \item Generate $R_{ij}\sim\text{Bernoulli}\left(B_{z_i^{(1)}, z_j^{(2)}}\right)$.
    \end{enumerate}
\end{enumerate}

The SBM-graphon has several limitations.
Firstly, the SBM-graphon function 
(Eq.~(\ref{SBM_graphon_intensity_function})) 
is piecewise-constant. That is, the generated intensities for node pairs are discrete and the number of different intensity values is limited to the number of blocks in the partition ($\boxplus$). Consequently, this leads to an over-simplified description when modelling real
relational data, which can result in at least two issues. On the one hand, as long as two nodes belong to the same segment in one dimension, their probabilities of generating relations with another node are the same even if the distance between the two nodes in that dimension is quite far. Conversely, given two nodes that are  close in one dimension but belong to two adjacent segments, their probabilities of generating relations with another node could be dramatically different, depending on the respective block intensities (e.g., $B_{k_1,k_2}$). 

The second limitation of the SBM-graphon is that it determines the intensity value for a pair of nodes by considering only the block ($\Box_{k_1,k_2}$) in which $(u_i, u_j)$ resides. However, the nodes relations with other nodes, especially neighbouring nodes in adjacent blocks, may also be expected to have a certain influence on the generation of the target relation, if one considers the relational data collectively. As a result, it may perhaps be beneficial to consider the interactions that naturally exist among all blocks when generating the relation $R_{ij}$. 

The third limitation of the SBM-graphon is that it provides latent information of node clustering as a side-product through the hidden labels $\{z_i^{(1)}, z_i^{(2)}\}_{i=1}^n$. However, the clustering information may not be ideal because each node is assigned to only one cluster in each dimension. That is, when considering the outgoing relations from node $i$, it is assumed that node $i$ consistently plays one single role in any relation with other nodes. In fact, in practice a node may play different roles by participating in different relations with different nodes. As a result, it would be more useful and flexible to allow a node to belong to multiple clusters in each dimension.

To address the limitations of piecewise-constant graphons (and in particular, the SBM-graphon), we propose a smoothing strategy to enable piecewise-constant graphons to produce continuous intensity values. The proposed smoothing graphons naturally consider interactions between the partitions and allow each node to play multiple roles in different relations.

\section{Main Models}
\subsection{The Integrated Smoothing Graphon~(ISG)}
In order to improve on the limitations of the piecewise-constant graphon
we first  develop the Integrated Smoothing Graphon~(ISG), based on the SBM-graphon construction. 
The piecewise-constant nature of the SBM-graphon is created through the use of an indicator function in \eqref{SBM_graphon_intensity_function} that selects only the particular block accommodating the target node pair. 
Accordingly, we replace the indicator function with an alternative that can produce continuous intensity values. Moreover, to capture the interaction between all blocks, we construct the smoothing graphon function to generate the intensity value as a summation over all block intensities, weighted by the importance of each block. Let $\bar{F}_{\Box_{k_1,k_2}}(u_i^{(1)}, u_j^{(2)})$ be the weight of the block $\Box_{k_1,k_2}$ with respect to $(u_i^{(1)}, u_j^{(2)})\in[0, 1]^2$. The mixture intensity $g\left(u_i^{(1)}, u_j^{(2)}\right)$, used to determine the $R_{ij}$, can then be represented as
\begin{align} \label{smooth_graphon_intensity_function}
g\left(u_i^{(1)}, u_j^{(2)}\right) = \sum_{k_1, k_2}\bar{F}_{\Box_{k_1,k_2}}(u^{(1)}_i, u^{(2)}_j)\cdot B_{k_1, k_2},
\end{align}
where { $\sum_{k_1,k_2}\bar{F}_{\Box_{k_1,k_2}}(u^{(1)}_i, u^{(2)}_j)=1$.} 

The ISG generative process can be summarised as:

1)$\sim$3) The block intensities~($\pmb B$), graphon partition~($\boxplus$) and $2$-dimensional coordinates ($\{u_{i}^{(1)}, u_{i}^{(2)}\}_{i=1}^n$) are generated as for the SBM-graphon;
\begin{enumerate}
  \setcounter{enumi}{3}
  \item For $i, j=1, \cdots, n$, 
    \begin{enumerate}
        \item Calculate the mixture intensity $g\left(u_i^{(1)}, u_j^{(2)}\right)$ according to \eqref{smooth_graphon_intensity_function} for the node coordinates $(u_i^{(1)}, u_j^{(2)})$;
        \item Generate $R_{ij}\sim\mbox{Bernoulli}\left(g\left(u_i^{(1)}, u_j^{(2)}\right)\right)$.
    \end{enumerate}
\end{enumerate}

As a consequence, while the SBM-graphon determines the relation intensity based only on the single block in which $(u_i^{(1)}, u_j^{(2)})$ resides, the ISG computes a mixture intensity as a weighted (and normalised) sum of all block intensities. That is, instead of assigning a weight of $1$ for one particular block and weights of $0$ for all other blocks, the ISG weights the importance of each block with respect to the pair of node coordinates $(u_i^{(1)}, u_j^{(2)})$. As long as the weighting function $\bar{F}_{\Box_{k_1,k_2}}(u_i^{(1)}, u_j^{(2)})$ is continuous, then the mixture intensity \eqref{smooth_graphon_intensity_function} is also continuous. The intensity function \eqref{smooth_graphon_intensity_function} then becomes a smoothing graphon function.

The ISG allows the mixture intensity to take any value between the minimum and maximum of all block intensities. As a result, the ISG provides more modelling flexibility compared to the SBM-graphon, where only limited discrete intensity values~(equivalent to the number of blocks) are available to describe relations.

\subsection{Construction of the mixture intensity}
To ensure that the graphon function \eqref{smooth_graphon_intensity_function} is continuous, we consider an integral-based weighting function of the form
{\small\begin{align}
{F}_{\Box_{k_1,k_2}}(u_i^{(1)}, u_j^{(2)}) \propto
\int_{L^{(1)}_{k_1-1}}^{L^{(1)}_{k_1}}f(x-u_i^{(1)})dx 
\cdot {\int_{L^{(2)}_{k_2-1}}^{L^{(2)}_{k_2}}f(x-u_j^{(2)})dx}
\end{align}}
where $f(x-u)$ is a 
univariate derivative function. Beyond the continuity requirement,  $f(x-u)$ and ${F}_{\Box_{k_1,k_2}}(u_i^{(1)}, u_j^{(2)})$ should satisfy the following three conditions: 
\begin{enumerate}
    \item $f(x-u)$ is non-negative;
    \item $f(x-u)$ increases with decreasing distance between $x$ and the corresponding coordinate $u$ (i.e.~$|x-u_i^{(1)}|$ or $|x-u_j^{(2)}|$). 
    This condition means that the closer the block $\Box_{k_1,k_2}$ is to the pair of node coordinates $(u_i^{(1)}, u_j^{(2)})$, the larger weight the block will be assigned. The maximum weight value is achieved when $|x-u_i^{(1)}|=0$ and  $|x-u_j^{(2)}|=0$; 
    \item 
    The total weight of all blocks remains invariant regardless of different partitioning of the unit space. That is,  ${F}_{\Box_{k_1,k_2}}(u_i^{(1)}, u_j^{(2)})={F}_{\Box'_{k_1,k_2}}(u_i^{(1)}, u_j^{(2)})+{F}_{\Box''_{k_1,k_2}}(u_i^{(1)}, u_j^{(2)})$, where $\Box'_{k_1,k_2}, \Box''_{k_1,k_2}$ are sub-boxes of $\Box_{k_1,k_2}$ such that $\Box_{k_1,k_2}=\Box'_{k_1,k_2}\cup\Box''_{k_1,k_2}$ and
    $\Box'_{k_1,k_2}\cap\Box''_{k_1,k_2}=\emptyset$. 
\end{enumerate} 

\begin{figure}[h]
\centering
\includegraphics[width = 0.35 \textwidth]{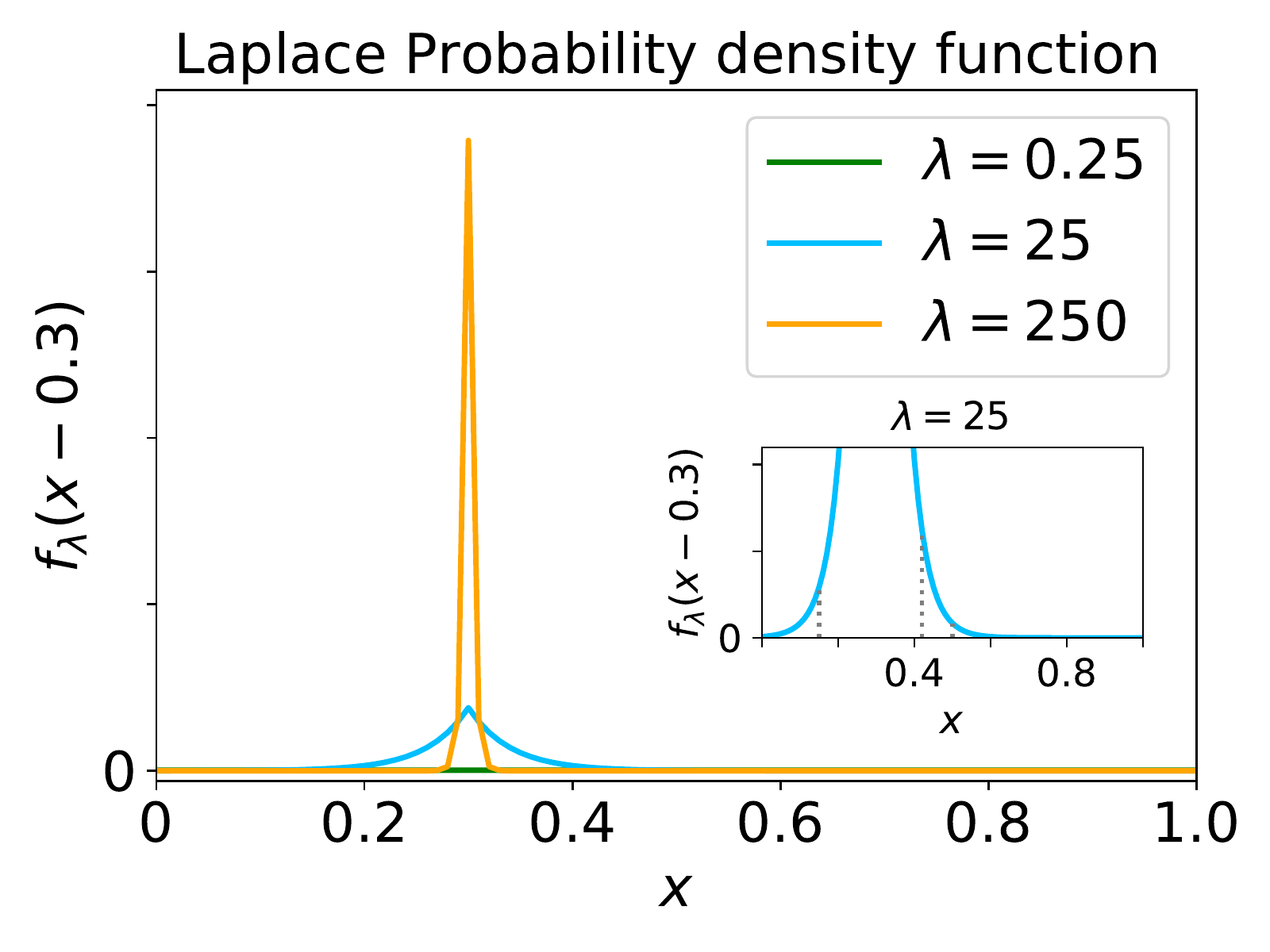}
\includegraphics[width = 0.36 \textwidth]{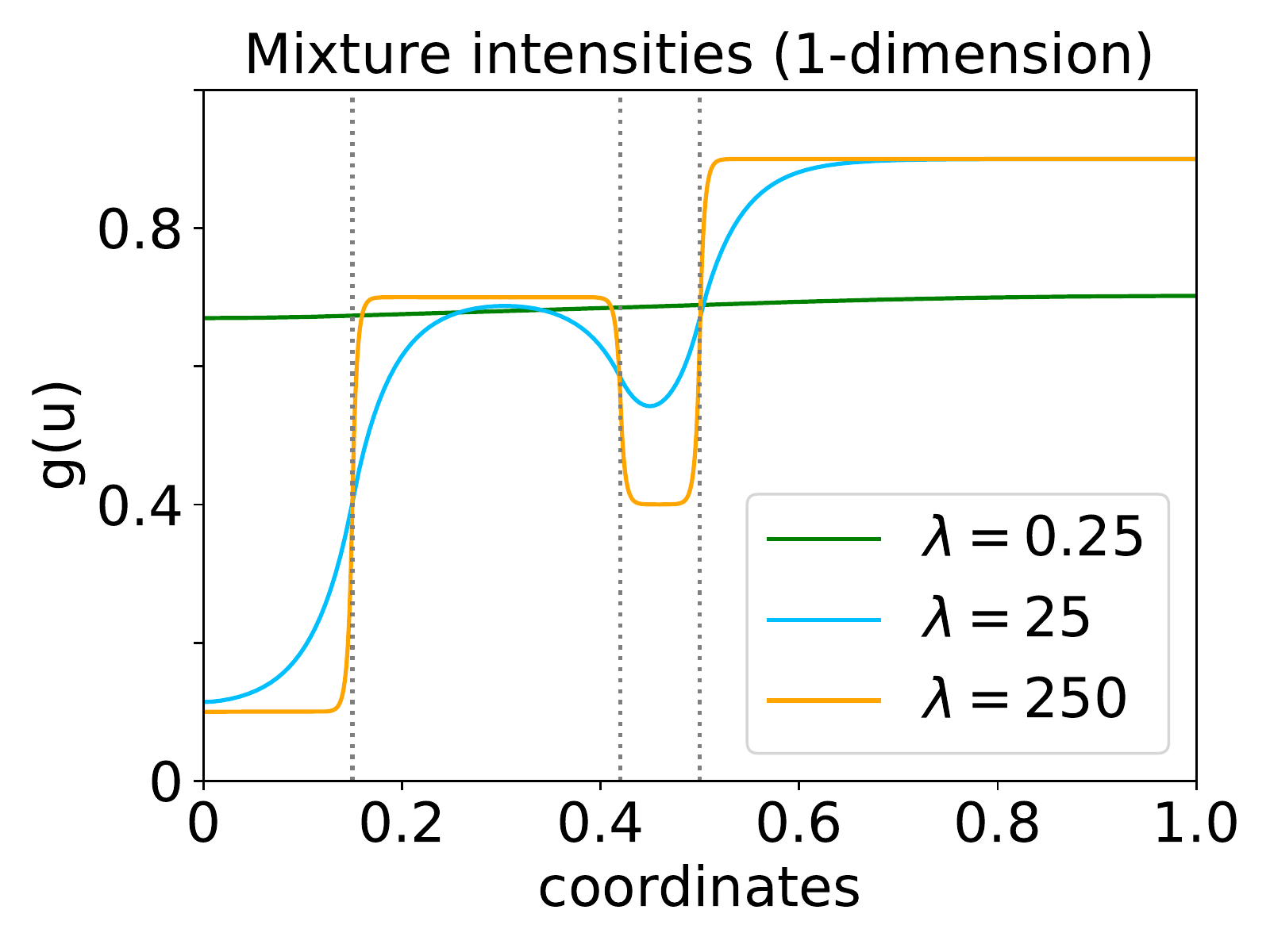}
\caption{The influence of the $\lambda$ parameter on the Laplace probability density function with coordinate located at $u=0.3$~(top), and the corresponding mixture intensities for $\{u_i\}_{i=1}^n$~(bottom). Different colors represent different settings of $\lambda$. The gray dotted lines represent segment division in one dimension with $\pmb{\theta}=(0.15, 0.27, 0.08, 0.5)^\top\sim\text{Dirichlet}(1, 1, 1,1)$. }
\label{fig:weight_function_vis}
\end{figure}

\begin{figure}[ht]
\centering
\includegraphics[width = 0.45 \textwidth]{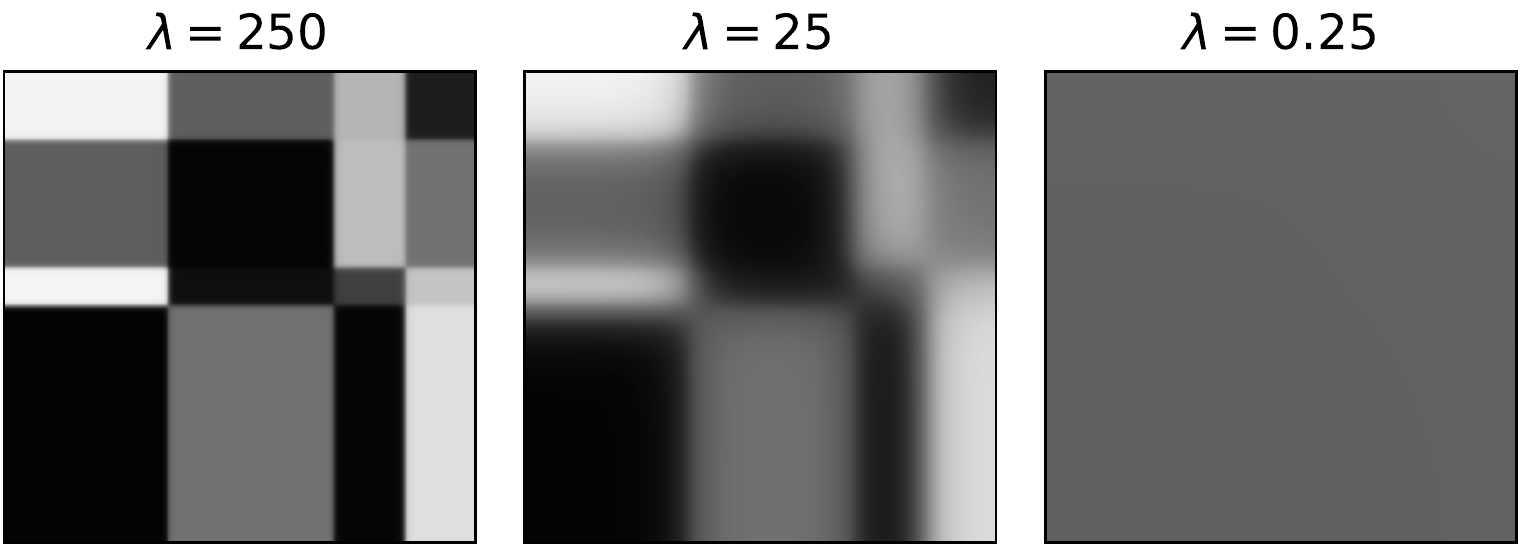}
\caption{Visualizations of the Integrated Smoothing Graphon under the Stochastic Block Model for different values of $\lambda$. Darker shading represents higher graphon intensity.}
\label{fig:lambda_related_visulizaton}
\end{figure}

There are many candidate functions satisfying these conditions, such as Gaussian or Laplace probability density functions. For ease of computation, we use the Laplace density (with location parameter $\mu = 0$) 
as the  derivative function, so that $f_{\lambda}(x-u)=\frac{{{\lambda}}}{2}e^{{-{{\lambda}} |x-u|}}$. Let $G_{\lambda}(x-u)=\left\{\begin{array}{lr}
    \frac{1}{2}e^{{{\lambda}} (x-u)}; & (x-u)<0 \\
    1-\frac{1}{2}e^{-{{\lambda}} (x-u)}; & (x-u)\ge0
\end{array}\right.$. We then have $\int_{L_{k_1-1}}^{L_{k_1}}f_{\lambda}(x-u)dx=G_{\lambda}(L_{k_1}-u)-G_{\lambda}(L_{k_1-1}-u) $. As a result, for relation $R_{ij}$ and corresponding node coordinates $(u^{(1)}_i, u^{(2)}_j)$, the normalised weight $\bar{F}_{\Box_{k_1,k_2}}(u^{(1)}_i, u^{(2)}_j)$ of the $(k_1, k_2)$-th block $\Box_{k_1,k_2}$ contributing to the mixture intensity of $R_{ij}$ is given by
\begin{align} \label{eq:pi_val}
    \bar{F}_{\Box_{k_1,k_2}}(u^{(1)}_i, u^{(2)}_j) =&  \frac{G_{\lambda}(L^{(1)}_{k_1}-u^{(1)}_i)-G_{\lambda}(L_{k_1-1}^{(1)}-u^{(1)}_i)}{G_{\lambda}(1-u^{(1)}_i)-G_{\lambda}(-u^{(1)}_i)}\nonumber \\
    &\times\frac{G_{\lambda}(L^{(2)}_{k_2}-u^{(2)}_j)-G_{\lambda}(L_{k_2-1}^{(2)}-u^{(2)}_j)}{G_{\lambda}(1-u^{(2)}_j)-G_{\lambda}(-u^{(2)}_j)}.
\end{align}

\begin{proposition}
$\sum_{k_1,k_2}\bar{F}_{\Box_{k_1,k_2}}(u^{(1)}_i, u^{(2)}_j)=1$.
\end{proposition}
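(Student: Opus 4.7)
The plan is to exploit the tensor-product structure of $\bar{F}_{\Box_{k_1,k_2}}(u^{(1)}_i, u^{(2)}_j)$: it is written as a product of a factor depending only on $k_1$ (and $u^{(1)}_i$) and a factor depending only on $k_2$ (and $u^{(2)}_j$). Hence the double sum over $(k_1,k_2)$ factors as
\begin{equation*}
\sum_{k_1,k_2}\bar{F}_{\Box_{k_1,k_2}}(u^{(1)}_i, u^{(2)}_j) = \left(\sum_{k_1=1}^{K} A_{k_1}\right)\left(\sum_{k_2=1}^{K} A'_{k_2}\right),
\end{equation*}
where $A_{k_1}$ is the first fraction in \eqref{eq:pi_val} and $A'_{k_2}$ is the second. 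So it suffices to show each single sum equals $1$.

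Next I would observe that for fixed $u^{(1)}_i$ the numerators of $A_{k_1}$ form a telescoping sequence: summing $G_{\lambda}(L^{(1)}_{k_1}-u^{(1)}_i)-G_{\lambda}(L^{(1)}_{k_1-1}-u^{(1)}_i)$ over $k_1=1,\ldots,K$ collapses to $G_{\lambda}(L^{(1)}_{K}-u^{(1)}_i)-G_{\lambda}(L^{(1)}_{0}-u^{(1)}_i)$. Using the boundary conventions $L^{(1)}_0=0$ and $L^{(1)}_K=1$ stated in the paper, this equals $G_{\lambda}(1-u^{(1)}_i)-G_{\lambda}(-u^{(1)}_i)$, which exactly matches the common denominator of $A_{k_1}$. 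Therefore $\sum_{k_1}A_{k_1}=1$, and the identical argument (with $u^{(2)}_j$ and $L^{(2)}_{k_2}$) gives $\sum_{k_2}A'_{k_2}=1$.

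The final step is trivial: multiply the two unit factors to conclude that the double sum equals $1$. There is essentially no obstacle here; the only thing to double-check is that the boundary values $L^{(1)}_0=L^{(2)}_0=0$ and $L^{(1)}_K=L^{(2)}_K=1$ (explicitly fixed w.l.o.g.~in the SBM-graphon construction) are what make the telescoped endpoints coincide with the normalising denominators. The proof is short enough that I would present it in a couple of display equations rather than in prose.
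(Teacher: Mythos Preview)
Your proof is correct and follows essentially the same approach as the paper: factor the double sum into a product of two single sums using the tensor-product form of $\bar{F}_{\Box_{k_1,k_2}}$, then observe each single sum telescopes to $1$ via the boundary conditions $L^{(d)}_0=0$, $L^{(d)}_K=1$. If anything, you spell out the telescoping step more explicitly than the paper does.
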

\begin{proof}
{\small\begin{align}
&\sum_{k_1,k_2}\bar{F}_{\Box_{k_1,k_2}}(u^{(1)}_i, u^{(2)}_j) \nonumber \\
=&\left[\sum_{k_1}\frac{G_{\lambda}(L^{(1)}_{k_1}-u^{(1)}_i)-G_{\lambda}(L_{k_1-1}^{(1)}-u^{(1)}_i)}{G_{\lambda}(1-u^{(1)}_i)-G_{\lambda}(-u^{(1)}_i)}\right]\nonumber \\
&\cdot\left[\sum_{k_2}\frac{G_{\lambda}(L^{(2)}_{k_2}-u^{(2)}_j)-G_{\lambda}(L_{k_2-1}^{(2)}-u^{(2)}_j)}{G_{\lambda}(1-u^{(2)}_j)-G_{\lambda}(-u^{(2)}_j)}\right]=1.
\end{align}}
\end{proof}

Fig.~\ref{fig:weight_function_vis} (top) illustrates the function curves of $f_{\lambda}(x-u)$ for $u=0.3$ and Fig.~\ref{fig:weight_function_vis} (bottom) shows the resulting one-dimensional mixture intensities 
under varying scale  parameter  values $\lambda=0.25, 25$ and $250$. 
It is easily observed that when $\lambda$ is smaller, both the curves of the derivative function and the mixture intensity become flatter and smoother. Conversely, for larger $\lambda$, the mixture intensity values (generated for the coordinate $0.3$) become more discrete.  Fig.~\ref{fig:lambda_related_visulizaton} visualizes the mixture intensities obtained by applying the ISG to the SBM under the same three $\lambda$ values.
\begin{proposition}
$\lambda$ controls the smoothness of the graphon, with $\lambda\rightarrow\infty$ recovering the piecewise-constant graphon, and $\lambda\rightarrow 0$ resulting in a globally constant graphon.\end{proposition}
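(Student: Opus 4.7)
The plan is to handle the two limiting regimes separately by analyzing the normalised weight $\bar{F}_{\Box_{k_1,k_2}}(u^{(1)}_i, u^{(2)}_j)$ in \eqref{eq:pi_val}, since the mixture intensity in \eqref{smooth_graphon_intensity_function} is linear in these weights with the $B_{k_1,k_2}$ held fixed. Both limits boil down to careful asymptotic analysis of the Laplace CDF $G_\lambda$.

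For $\lambda\to\infty$, I would use that the Laplace density $f_\lambda(x-u)=\tfrac{\lambda}{2}e^{-\lambda|x-u|}$ concentrates all mass at $u$, so for any $u\in(0,1)$ the denominator $G_\lambda(1-u)-G_\lambda(-u)=1-\tfrac{1}{2}e^{-\lambda(1-u)}-\tfrac{1}{2}e^{-\lambda u}\to 1$, and the numerator $G_\lambda(L_{k_1}-u)-G_\lambda(L_{k_1-1}-u)$ tends to $1$ if $u\in(L_{k_1-1},L_{k_1})$ and to $0$ otherwise (explicit bounds on the tails of $G_\lambda$ suffice). Applied in both dimensions, this gives $\bar{F}_{\Box_{k_1,k_2}}(u^{(1)}_i,u^{(2)}_j)\to \mathbf{1}((u^{(1)}_i,u^{(2)}_j)\in\Box_{k_1,k_2})$, so \eqref{smooth_graphon_intensity_function} reduces to \eqref{SBM_graphon_intensity_function}, i.e.\ the piecewise-constant SBM-graphon.

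For $\lambda\to 0$, I would Taylor-expand $G_\lambda$ about $\lambda=0$. A short computation covering both branches of $G_\lambda$ gives the uniform expansion
\begin{equation*}
G_\lambda(y)=\tfrac{1}{2}+\tfrac{\lambda}{2}y+O(\lambda^2),\qquad y\in[-1,1].
\end{equation*}
Substituting into \eqref{eq:pi_val}, the $\tfrac{1}{2}$ terms cancel in every difference, and the leading $\lambda$-factors cancel between numerator and denominator, yielding
\begin{equation*}
\bar{F}_{\Box_{k_1,k_2}}(u^{(1)}_i,u^{(2)}_j)\;\longrightarrow\;\frac{L^{(1)}_{k_1}-L^{(1)}_{k_1-1}}{1-0}\cdot\frac{L^{(2)}_{k_2}-L^{(2)}_{k_2-1}}{1-0}=\theta^{(1)}_{k_1}\theta^{(2)}_{k_2}.
\end{equation*}
Hence the limiting mixture intensity is $\sum_{k_1,k_2}\theta^{(1)}_{k_1}\theta^{(2)}_{k_2}B_{k_1,k_2}$, which does not depend on $(u^{(1)}_i,u^{(2)}_j)$ and is therefore a globally constant graphon (equal to the prior expected block intensity weighted by the group masses).

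The only step that needs a little care is the $\lambda\to 0$ analysis: one must keep track of the fact that both numerator and denominator vanish to first order in $\lambda$, so the limit is obtained from the ratio of the first-order coefficients rather than from a naive substitution $\lambda=0$. To make the argument fully rigorous, I would justify the expansion of $G_\lambda$ uniformly in its argument on $[-1,1]$ (e.g.\ by explicit bounds on the Lagrange remainder of $e^{\lambda y}$) so that it can be plugged into \eqref{eq:pi_val}. The monotone dependence of the weights on $\lambda$ between these two extremes (piecewise-constant at $\lambda=\infty$, uniform averaging at $\lambda=0$) then justifies the interpretation of $\lambda$ as a smoothness parameter, consistent with the behaviour illustrated in Figs.~\ref{fig:weight_function_vis} and \ref{fig:lambda_related_visulizaton}.
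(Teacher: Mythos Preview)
Your proposal is correct and follows essentially the same approach as the paper: for $\lambda\to\infty$ the paper also splits into the cases $u$ below, inside, or above the interval $[L_{k_1-1},L_{k_1}]$ and computes the limits of $G_\lambda$ directly, and for $\lambda\to 0$ the paper uses L'H\^opital's rule, which is exactly your first-order Taylor expansion $G_\lambda(y)=\tfrac12+\tfrac{\lambda}{2}y+O(\lambda^2)$ in disguise. Your write-up is slightly more careful about uniformity, but the arguments are the same in substance.
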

\begin{proof}
Using the L'hospital rule, when $\lambda\rightarrow 0$, we have
\begin{align}
&\lim_{\lambda\rightarrow 0} \frac{G_{\lambda}(L^{(1)}_{k_1}-u^{(1)}_i)-G_{\lambda}(L_{k_1-1}^{(1)}-u^{(1)}_i)}{G_{\lambda}(1-u^{(1)}_i)-G_{\lambda}(-u^{(1)}_i)}\nonumber \\
=&\frac{l_{k_1}^{(1)}-u_i^{(1)}-(l_{{k_1}-1}^{(1)}-u_i^{(1)})}{1-u_i^{(1)}+u_i^{(1)}} = l_{k_1}^{(1)}-l_{{k_1}-1}^{(1)}
\end{align}
Thus, we get $\bar{F}_{\Box_{k_1,k_2}}(u^{(1)}_i, u^{(2)}_j) = (l_{k_1}^{(1)}-l_{k_1-1}^{(1)})(l_{k_2}^{(2)}-l_{k_2-1}^{(2)})$, which is unrelated to the coordinate of $(u^{(1)}_i, u^{(2)}_j)$. The graphon is a globally constants graphon. 

We have three different cases when $\lambda\rightarrow \infty$: case (1), $L_{k_1}^{(1)}>L_{k_1-1}^{(1)}>u_i^{(1)}$, we have 
\begin{align}
&\lim_{\lambda\rightarrow \infty} \frac{G_{\lambda}(L^{(1)}_{k_1}-u^{(1)}_i)-G_{\lambda}(L_{k_1-1}^{(1)}-u^{(1)}_i)}{G_{\lambda}(1-u^{(1)}_i)-G_{\lambda}(-u^{(1)}_i)}\nonumber \\
= & \lim_{\lambda\rightarrow \infty} \frac{1-\frac{1}{2}e^{-\lambda(L^{(1)}_{k_1}-u^{(1)}_i)}-(1-\frac{1}{2}e^{-\lambda(L^{(1)}_{k_1-1}-u^{(1)}_i)})}{1-\frac{1}{2}e^{-\lambda(1-u^{(1)}_i)}-\frac{1}{2}e^{-\lambda(u^{(1)}_i)}}=0;\nonumber
\end{align}
case (2), $L_{k_1-1}^{(1)}<L_{k_1-1}^{(1)}<u_i^{(1)}$, we have 
\begin{align}
&\lim_{\lambda\rightarrow \infty} \frac{G_{\lambda}(L^{(1)}_{k_1}-u^{(1)}_i)-G_{\lambda}(L_{k_1-1}^{(1)}-u^{(1)}_i)}{G_{\lambda}(1-u^{(1)}_i)-G_{\lambda}(-u^{(1)}_i)}\nonumber \\
= & \lim_{\lambda\rightarrow \infty} \frac{\frac{1}{2}e^{\lambda(L^{(1)}_{k_1}-u^{(1)}_i)}-(\frac{1}{2}e^{\lambda(L^{(1)}_{k_1-1}-u^{(1)}_i)})}{1-\frac{1}{2}e^{-\lambda(1-u^{(1)}_i)}-\frac{1}{2}e^{-\lambda(u^{(1)}_i)}}=0;\nonumber
\end{align}
case (3), $L_{k_1-1}^{(1)}<u_i^{(1)}<L_{k_1-1}^{(1)}$, we have 
\begin{align}
&\lim_{\lambda\rightarrow \infty} \frac{G_{\lambda}(L^{(1)}_{k_1}-u^{(1)}_i)-G_{\lambda}(L_{k_1-1}^{(1)}-u^{(1)}_i)}{G_{\lambda}(1-u^{(1)}_i)-G_{\lambda}(-u^{(1)}_i)}\nonumber \\
= & \lim_{\lambda\rightarrow \infty} \frac{1-\frac{1}{2}e^{-\lambda(L^{(1)}_{k_1}-u^{(1)}_i)}-(\frac{1}{2}e^{\lambda(L^{(1)}_{k_1-1}-u^{(1)}_i)})}{1-\frac{1}{2}e^{-\lambda(1-u^{(1)}_i)}-\frac{1}{2}e^{-\lambda(u^{(1)}_i)}}=1;\nonumber
\end{align}
That is, $\bar{F}_{\Box_{k_1,k_2}}(u^{(1)}_i, u^{(2)}_j)=1$ if and only if the coordinate $(u^{(1)}_i, u^{(2)}_j)$ locates in the $(k_1, k_2)$-th block. Thus, the graphon would turn into a piecewise-constant one when $\lambda\rightarrow\infty$. 
\end{proof}
Accordingly, we refer to $\lambda$ as the smoothing parameter.

\subsection{Latent Feature Smoothing Graphon~(LFSG) with probabilistic assignment}
While the ISG addresses the limitations of the SBM-graphon by generating continuous intensity values, its graphon function \eqref{smooth_graphon_intensity_function} indicates that all blocks are involved in calculating the mixture intensity for generating individual relations. Accordingly, the additive form for evaluating the mixture intensity makes it difficult to form efficient inference schemes for all  random variables. 
To
improve 
 inferential
efficiency we introduce auxiliary pairwise latent labels $\{s_{ij}\}_{j=1}^n$ (associated with node $i$) and $\{r_{ij}\}_{i=1}^n$ (associated with node $j$) for individual relations $\{R_{ij}\}_{i,j=1}^n$, where $s_{ij},r_{ij} \in\{1, \ldots, K\}$. The $\{s_{ij}\}_{j=1}^n$ and $\{r_{ij}\}_{i=1}^n$ are sampled from the respective node categorical distributions in their corresponding dimensions using normalised weights as probabilities. In particular
\begin{align} \label{eq:s_ij_r_ij}
\{s_{ij}\}_{j=1}^n&\sim\text{Categorical}(\bar{F}^{(1)}_{1}(u^{(1)}_i), \ldots, \bar{F}^{(1)}_{K}(u^{(1)}_i)) \nonumber \\
\{r_{ij}\}_{i=1}^n&\sim\text{Categorical}(\bar{F}^{(2)}_{1}(u^{(2)}_j), \ldots, \bar{F}^{(2)}_{K}(u^{(2)}_j)),
\end{align} 
where $\bar{F}_{k}(u)=\frac{G_{\lambda}(L_{k}-u)-G_{\lambda}(L_{k-1}-u)}{G_{\lambda}(1-u)-G_{\lambda}(-u)}$ is the normalised weight of segment $k$ in the dimension of coordinate $u$. 
For each relation from node $i$ to node $j$ ($R_{ij}$), the hidden label $s_{ij}$ denotes the group that node $i$ belongs to (in the $1$st dimension) and $r_{ij}$ denotes the group that node $j$ belongs to (in the $2$nd dimension). Through the introduction of the two labels, the final intensity in determining $R_{ij}$ can be obtained similarly to the Mixed Membership Stochastic Block Model (MMSB)~\cite{airoldi2009mixed}:
\begin{align}
    P(R_{ij}=1|s_{ij}, r_{ij}, \pmb B)=B_{s_{ij}, r_{ij}}.
\end{align}
Note that since both $\{s_{ij}\}_{j=1}^n$ and $\{r_{ij}\}_{j=1}^n$ are $n$-element arrays, each node has the potential to belong to multiple segments, rather than the single segment under the SBM-graphon. When participating in different relations, each outgoing node $i$ (incoming node $j$) may fall into different segments, which means that each node may play different roles when taking part in different relations.
Note that taking expectations over the hidden labels $s_{ij}$ and $r_{ij}$, results in the same intensity as for the ISG, so that
\begin{align}
    \mathbb{E}_{s_{ij}, r_{ij}}\left[P(R_{ij}=1\vert s_{ij}, r_{ij}, \pmb B)\right]= g\left((u_i^{(1)}, u_j^{(2)})\right).
\end{align}
We term this approach the Latent Feature Smoothing Graphon~(LFSG). Its generative process is described as follows:

1)$\sim$3) The block intensities~($\pmb B$), graphon partition~($\boxplus$) and $2$-dimensional coordinates ($\{u_{i}^{(1)}, u_{i}^{(2)}\}_{i=1}^n$) are generated as for the SBM-graphon;
\begin{enumerate}
  \setcounter{enumi}{3}
    \item For $i=1, \cdots, n$, calculate the hidden label distributions in each dimension, $\bar{\pmb{F}}^{(1)}(u_i^{(1)})$ and $\bar{\pmb{F}}^{(2)}(u_i^{(2)})$, where $\bar{\pmb{F}}^{(1)}(u_i^{(1)})=({\bar{F}}^{(1)}_1(u_i^{(1)}), \ldots, {\bar{F}}^{(1)}_K(u_i^{(1)}))$; 
    \item For $i, j=1, \cdots, n$, 
    \begin{enumerate}
        \item Generate the hidden labels $s_{ij}\sim\bar{\pmb{F}}^{(1)}(u_i^{(1)}), r_{ij}\sim\bar{\pmb{F}}^{(2)}(u_j^{(2)})$ following \eqref{eq:s_ij_r_ij}
        \item Generate $R_{ij}\sim\text{Bernoulli}\left(B_{s_{ij}, r_{ij}}\right)$.
    \end{enumerate}
\end{enumerate}

The graphical models for implementing the ISG within the SBM (referred to as the ISG-SBM), and also implementing the LFSG within the SBM (referred to as the LFSG-SBM)  are illustrated in Fig.~\ref{fig:graphical_model}.
The main difference between the two models  -- the introduction of the pairwise hidden labels $s_{ij}$ and $r_{ij}$ for generating each relation $R_{ij}$ -- allows the LFSG-SBM to enjoy the following advantages over the ISG-SBM:
\begin{itemize}
\item The aggregated counting information of the hidden labels enables efficient Gibbs sampling of the block intensities $\pmb{B}$.
\item Calculation involving $\bar{F}_{\Box}$ is instead reduced to calculation involving $\bar{F}_{k}(u)$, avoiding  the inclusion of all blocks when calculating the mixture intensity.
\item 
Because each node is allowed to belong to different groups when linking to other nodes, permitting differences in the  natures of those links, the group distribution $\bar{F}(u)$ is then easily interpretable as  the group membership distribution for that node.
For example, a higher membership degree in  group $k$ indicates the node is more important or active in group $k$.
\end{itemize}

\begin{figure}[h]
\centering
\includegraphics[width = 0.45 \textwidth]{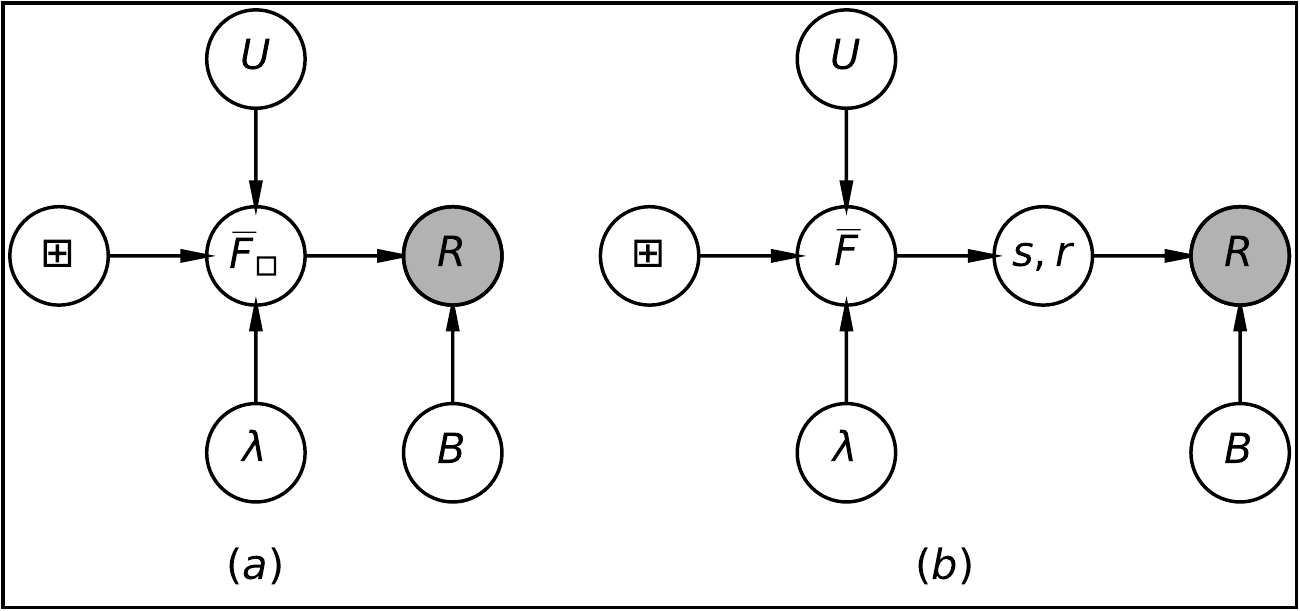}
\caption{The graphical model for (a) the ISG-SBM and (b) the LFSG-SBM.  (a) The weights of the blocks $\bar{F}_{\Box}$ are first calculated by using the partition $\boxplus$, node coordinates $U$ and smoothing parameter $\lambda$. Then the $\bar{F}_{\Box}$ and block intensities $\pmb B$ are integrated together to generate the exchangeable relations $R$. (b) The weight $\bar{F}$ for each node is individually generated using the partition $\boxplus$, node coordinates $U$ and the smoothing parameter $\lambda$, based on the auxiliary hidden labels $s, r$ for the node pair relationship. Then $s, r$ are used to generate the exchangeable relational, $R$, together with the block intensities $\pmb B$.}
\label{fig:graphical_model}
\end{figure}

\subsection{Extensions to other piecewise-constant graphons}
The major difference between the construction of the existing piecewise-constant graphons is the generation process of partitions~($\boxplus$; Fig.~\ref{fig:graphon_comparison}). As a result, our smoothing approach, while described for the  SBM-graphon, can be straighforwardly applied to other piecewise-constant graphons. For example, to apply the ISG to other piecewise-constant graphons, we can similarly calculate a mixture intensity as a weighted sum of the intensities of all existing blocks.
When the partitioned blocks are rectangular-shaped (as for e.g.~the MP-graphon, RTP-graphon and RBP-graphon), the intensity for each block can be computed by independently integrating the derivative function over two dimensions. If the partitioned blocks are shaped as convex-polygons (as for e.g.~the BSP-graphon~\cite{pmlr-v84-fan18b}), the intensity can be generated via integrating the derivative function over the polygon.

\section{Inference}

\begin{algorithm}[t]
\caption{{MCMC for the ISG}}
\label{detailInference_ISG}
\begin{algorithmic}
  \REQUIRE Exchangeable relational data $R\in\{0, 1\}^{n\times n}$, hyperparameters $\alpha_0, \beta_0, \pmb{\alpha}_{1\times K}$, iteration time $T$
  \ENSURE $\{u_{i}^{(1)}, u_{i}^{(2)}\}_{{i=1}}^n, \pmb{\theta}^{(1)}, \pmb{\theta}^{(2)}, \pmb{B}, \lambda$
  \FOR{$t=1,\cdots, T$}
    \FOR{$i=1,\ldots,n$}
    \STATE Sample $u_i^{(1)}, u_i^{(2)}$; // according to \eqref{eq:sample_u_i}   
    \ENDFOR
      \STATE Sample $\pmb{\theta}^{(1)}, \pmb{\theta}^{(2)}$; // according to \eqref{eq:sample_theta}
    \FOR{$k_1, k_2=1,\ldots,K$}
      \STATE Sample $B_{k_1,k_2}$; // according to \eqref{eq:sample_B_k1k2}
    \ENDFOR
      \STATE Sample $\lambda$; // according to \eqref{eq:sample_lambda}
  \ENDFOR
\end{algorithmic}
\end{algorithm}

\begin{algorithm}[t]
\caption{MCMC for the LFSG}
\label{detailInference_SG}
\begin{algorithmic}
  \REQUIRE Exchangeable relational data $R\in\{0, 1\}^{n\times n}$, hyperparameters $\alpha_0, \beta_0, \pmb{\alpha}_{1\times K}$, iteration time $T$
  \ENSURE $\{u_{i}^{(1)}, u_{i}^{(2)}\}_{{i=1}}^n, \pmb{\theta}^{(1)}, \pmb{\theta}^{(2)}, \pmb{B}, \{s_{ij},r_{ij}\}_{i,j=1}^n, \lambda$
  \FOR{$t=1,\cdots, T$}
    \FOR{$i=1,\ldots,n$}
    \STATE Sample $u_i^{(1)}, u_i^{(2)}$; // according to \eqref{eq:sample_u_i}
    \ENDFOR
      \STATE Sample $\pmb{\theta}^{(1)}, \pmb{\theta}^{(2)}$; // according to \eqref{eq:sample_theta}
    \FOR{$k_1, k_2=1,\ldots,K$}
      \STATE Sample $B_{k_1,k_2}$; // according to \eqref{eq:sample_B_k1k2}
    \ENDFOR
    \FOR{$i,j=1,\ldots,n$}
    \STATE Sample $s_{ij},r_{ij}$; // according to \eqref{eq:sample_s_r_ij}
    \ENDFOR
      \STATE Sample $\lambda$; // according to \eqref{eq:sample_lambda}
  \ENDFOR
\end{algorithmic}
\end{algorithm}

We present a Markov Chain Monte Carlo~(MCMC) algorithm for posterior model inference, with detailed steps for the ISG and the LFSG  as illustrated in Algorithms \ref{detailInference_ISG} and \ref{detailInference_SG} respectively.
In general, the joint distribution over the hidden labels $\{s_{ij}, r_{ij}\}_{i,j=1}^n$,  pairwise node coordinates $\{u_{i}^{(1)}, u_{i}^{(2)}\}_{i=1}^n$, group distributions $\pmb{\theta}^{(1)}, \pmb{\theta}^{(2)}$, the block intensities $\pmb{B}$ and the smoothing parameter $\lambda$ is:
\begin{align}
   & P(\{s_{ij}, r_{ij}, R_{ij}\}_{i,j=1}^n, \{u_{i}^{(1)}, u_{i}^{(2)}\}_{{i=1}}^n, \pmb{\theta}^{(1)}, \pmb{\theta}^{(2)}, \pmb{B}, \lambda|\alpha_0, \beta_0)\nonumber \\
   \propto & \prod_{i,k}\left[\bar{F}_{k}^{(1)}(u_i^{(1)}|\pmb{\theta}^{(1)}, \lambda)^{m_{ik}^{(1)}}\bar{F}_{k}^{(2)}(u_i^{(2)}|\pmb{\theta}^{(2)}, \lambda)^{m_{ik}^{(2)}}\right]\nonumber \\
   &\cdot \prod_{k_1, k_2}\left[B_{k_1, k_2}^{N_{k_1, k_2}^{(1)}+\alpha_0-1}(1-B_{k_1, k_2})^{N_{k_1, k_2}^{(0)}+\beta_0-1}\right]\nonumber \\
   &\cdot\prod_{k}\left[(\theta_{k}^{(1)})^{\alpha_{k}-1}(\theta_{k}^{(2)})^{\alpha_{k}-1}\right]\cdot P(\lambda)
\end{align}
where $m_{ik}^{(1)}=\sum_{j=1}^n\pmb{1}(s_{ij}=k), m_{{ik}}^{(2)}=\sum_{j=1}^n\pmb{1}(r_{ji}=k), N_{k_1, k_2}^{(1)}=\sum_{(i,j):s_{ij}=k_1, r_{ij}=k_2}\pmb{1}(R_{ij}=1), N_{k_1, k_2}^{(0)}=\sum_{(i,j):s_{ij}=k_1, r_{ij}=k_2}\pmb{1}(R_{ij}=0)$. In this joint distribution, we have set the following prior distributions for the variables: $s_{ij}\sim\text{Categorical}(\bar{\pmb{F}}(u_i^{(1)}|\pmb{\theta}^{(1)}, \lambda)), B_{k_1,k_2}\sim\text{Beta}(\alpha_0, \beta_0), \pmb{\theta}^{(1)}\sim\text{Dirichlet}(\pmb{\alpha}_{1\times K})$.

The details for updating each parameter in the ISG and LFSG MCMC algorithms are listed below. 

\paragraph{Updating $\{u_{i}^{(1)}, u_{i}^{(2)}\}_{i=1}^n$}
Independent Metropolis-Hastings steps can be used to update the variables $u_{i}^{(1)}, u_{i}^{(2)}$. We propose a new sample for $u_{i}^{(1)}$ 
as $u^*\sim\text{Beta}[\alpha_u, \beta_u]$, and accept this proposal with probability $\min(1, \alpha_{u_i^{(1)}})$ where
\begin{align} \label{eq:sample_u_i}
\alpha_{u_i^{(1)}} = \frac{Be(u_i^{(1)}|\alpha_u,\beta_u)}{Be(u^*|\alpha_u,\beta_u)}
\prod_{k}\frac{\bar{F}_{k}^{(1)}(u^{*}|\pmb{\theta}^{(1)}, \lambda)^{m_{ik}^{(1)}}}{\bar{F}^{(1)}_{k}(u_i^{(1)}|\pmb{\theta}^{(1)}, \lambda)^{m_{ik}^{(1)}}},
\end{align}
where $Be(u|\alpha,\beta)$ denotes the Beta density with parameters $\alpha$ and $\beta$ evaluated at $u$. 
The update for $u_i^{(2)}$  proceeds similarly. Note that each of the $2n$ parameters $\{u_i^{(1)},u_i^{(2)}\}_{i=1}^n$ can be updated in parallel. In our simulations we found that $\alpha_u=\beta_u=1$ gave good sampler performance.

\paragraph{Updating $\pmb{\theta}^{(1)}, \pmb{\theta}^{(2)}$}
A Metropolis-Hastings step can be used to update  $\pmb{\theta}^{(1)}$, and $\pmb{\theta}^{(2)}$. For $\pmb{\theta}^{(1)}$ or $\pmb{\theta}^{(2)}$ we draw a proposed sample  $\pmb{\theta}^*\sim\text{Dirichlet}(\pmb{\alpha}_{1\times K})$ from a Dirichlet distribution with concentration parameters $\pmb{\alpha}_{1\times K}$. We accept the proposal $\pmb{\theta}^*$ for w.l.o.g.~$\pmb{\theta}^{(1)}$ with probability $\min(1, \alpha_{\pmb{\theta}^{(1)}})$, where
\begin{align} \label{eq:sample_theta}
\alpha_{\pmb{\theta}^{(1)}} = \prod_{i,k}\frac{\bar{F}^{(1)}_{k}(u_i^{(1)}|\pmb{\theta}^{*}, \lambda)^{m_{ik}^{(1)}}}{\bar{F}^{(1)}_{k}(u_i^{(1)}|\pmb{\theta}^{(1)}, \lambda)^{m_{ik}^{(1)}}},
\end{align}
with a similar update for $\pmb{\theta}^{(2)}$. Both $\pmb{\theta}^{(1)}$ and $\pmb{\theta}^{(2)}$ can be updated in parallel.

\paragraph{Updating $\pmb B$}
The conjugacy between the prior  and the conditional likelihood for $\pmb B$ means that we can update $\pmb B$ via a Gibbs sampling step.
Specifically, each entry $B_{k_1, k_2}$ can be updated in parallel via
\begin{align} \label{eq:sample_B_k1k2}
B_{k_1, k_2}\sim\text{Beta}(\alpha_0+N_{k_1, k_2}^{(1)}, \beta_0+N_{k_1, k_2}^{(0)}), \forall k_1, k_2.
\end{align}

\paragraph{Updating $\{s_{ij}, r_{ij}\}_{i,j=1}^n$}
The posterior distribution of $s_{ij}$ is a categorical distribution, where the probability of $s_{ij}=k$ is
\begin{align} \label{eq:sample_s_r_ij}
   P(s_{ij}=k|\theta_{k}^{(i)}, R_{ij}, B_{k,r_{ij}})\propto\: & \bar{F}_{k}^{(1)}(u_i^{(1)}|\pmb{\theta}^{(1)}, \lambda)\\&\times B_{k,r_{ij}}^{R_{ij}}\nonumber 
    (1-B_{k,r_{ij}})^{1-R_{ij}},
\end{align}
and from which $s_{ij}$ may be straightforwardly updated ($r_{ij}$ may be updated in a similar way).
Each of the $2n$ parameters can be updated in parallel.

\paragraph{Updating $\lambda$}
A Metropolis-Hastings step can be used to update $\lambda$. We draw a proposed value $\lambda^*\sim\Gamma(1, 1)$ and accept it with probability $\min(1, \alpha_{\lambda})$, where
\begin{align} \label{eq:sample_lambda}
\alpha_{\lambda}=\prod_{i,k}\frac{\bar{F}^{(1)}_{k}(u_i^{(1)}|\pmb{\theta}^{(1)}, \lambda^*)^{m_{i{k}}^{(1)}}\bar{F}^{(2)}_{k}(u_i^{(2)}|\pmb{\theta}^{(2)}, \lambda^*)^{m_{i{k}}^{(2)}}}{\bar{F}^{(1)}_{k}(u_i^{(1)}|\pmb{\theta}^{(1)}, \lambda)^{m_{ik}^{(1)}}\bar{F}^{(2)}_{k}(u_i^{(2)}|\pmb{\theta}^{(2)}, \lambda)^{m_{ik}^{(2)}}}.
\end{align} 




\section{Related work}
There are many notable Bayesian methods for modelling exchangeable relational data. First, we review related models that can be viewed via graphon theory. We then discuss the Mixed-Membership Stochastic Block Model~(MMSB)~\cite{airoldi2009mixed} and highlight the differences between the MMSB and the LFSG. Finally, we analyse and compare the computational complexities of our model compared to existing methods.

\subsection{Graphons for modelling exchangeable relational data}
The Mondrian process relational model~(MP-RM; Fig.~\ref{fig:graphon_comparison}, centre-left)~\cite{roy2007learning,roy2009mondrian,roy2011thesis} is a representative model which generates $k$-d tree structured piecewise-constant graphons. In general, the Mondrian process recursively generates axis-aligned cuts in the unit square 
and partitions the space in a hierarchical fashion known as a $k$-d tree. The tree structure is regulated by attaching an exponentially distributed cost to each axis-aligned cut, so that the tree generation process terminates when the accumulated cost exceeds a budget value. 
The Binary Space Partitioning-Tree process relational model~(BSP-RM)~\cite{pmlr-v84-fan18b,pmlr-v89-fan18a} also generates tree structured partitions. The difference between the BSP-RM and the MP-RM is that the BSP-RM uses two dimensions to form oblique cuts and thus generate convex polyhredon-shaped blocks. These oblique cuts concentrate more on describing the inter-dimensional dependency and can produce more efficient space partitions.

The regular-grid piecewise-constant graphon is similar earlier model (Fig.~\ref{fig:graphon_comparison}, left). Generally, it is constructed from $2$ independent partition processes in a $2$-dimensional space. The resulting orthogonal crossover between these dimensions produces regular grids in the space. Typical regular-grid partition models include the SBM~\cite{nowicki2001estimation} and its infinite states variant, the Infinite Relational Model~(IRM)~\cite{kemp2006learning}. The SBM uses a Dirichlet distribution~(or Dirichlet process for the IRM) to independently generate a finite~(or infinite for the IRM) number of segments in each dimension.

The Rectangular Tiling process relational model~(RTP-RM; Fig.~\ref{fig:graphon_comparison}, centre-right)~\cite{nakano2014rectangular} produces a flat partition structure on a two-dimensional array by assigning each entry to an existing block or a new block in sequence, without violating the rectangular restriction of the blocks. By relaxing the restrictions of the hierarchical or regular-grid structure, the RTP-RM aims to provide more flexibility in block generation. However, the process of generating blocks is quite complicated for practical usage. Moreover,  the hierarchical and regular-grid partition models can be used for continuous space and multi-dimensional arrays~(after trivial modifications), the RTP-RM is restricted to (discrete) arrays only. 

The Rectangular Bounding process relational model~(RBP-RM)~\cite{NIPS2018_RBP} uses a bounding strategy to generate rectangular blocks in the space. In contrast to the previously described cutting strategies, the RBP-RM concentrates more on the important regions of the space and avoids over-modelling sparse and noisy regions. In the RBP-RM, the number of possible intensities 
is equivalent to the number of blocks, which follows a Poisson distribution and is finite almost surely.

The Gaussian process relational model~(GP-RM; Fig.~\ref{fig:graphon_comparison}, right)~\cite{RandomFunPriorsExchArrays} utilises a prior over a random function in the unit square to form a continuous graphon. In this way it can potentially generate desired continuous intensity values via the graphon function. However, the computational cost of the GP-RM is the same as that of the Gaussian process, which  scales to the cube of the number of nodes ($n$). 

\subsection{Comparing the LFSG and the MMSB} The MMSB is another notable Bayesian method for modelling exchangeable relational data. In contrast to other graphon methods,
the MMSB allows each node $i$ to have a group distribution $\pmb{F}_i$, which follows a Dirichlet distribution. To form the relation between any two nodes $i,j$, a latent label pair consisting of a sender and a receiver $(s_{ij}, r_{ij})$ is first generated via $s_{ij}\sim\text{Categorical}(\pmb{F}_i)$, and $r_{ij}\sim\text{Categorical}(\pmb{F}_j)$. The relation $R_{ij}$ may then be generated based on the intensity of the block $\pmb{B}$ formed by group $s_{ij}$ and group $r_{ij}$: $R_{ij}\sim\text{Bernoulli}(B_{s_{ij},r_{ij}})$. 
Our proposed LFSG shares similarities with the MMSB, since both of them use group distributions to represent individual nodes and the likelihood generation method is the same. However, there are key differences. These are: (1) The priors for the group distributions are different. In the MMSB, the group distributions of all nodes are generated independently from a Dirichlet distribution, whereas  in the LFSG, the group distributions are highly correlated~(determined by the node coordinates and the unified partition structure); (2) The MMSB requires $nK$ parameters to form the group distributions, while the LFSG requires only {$2(n+K)$} parameters; (3) The MMSB cannot be described by graphon theory (because it involves $n$ independent group distribution $\pmb{F}_i$ for generating group distributions), whereas the LFSG naturally fits within the graphon framework. 

\begin{table}
\centering 
\caption{Model complexity comparison}
\begin{tabular}{l|cc}
  \hline
  Model & Intensity computation & Label sampling\\
  \hline
  SBM & $\mathcal{O}(K^2L)$ & $\mathcal{O}(nK)$     \\
  ISG & $\mathcal{O}(K^2n^2)$ & -- \\ 
  LFSG & $\mathcal{O}(K^2L)$ & $\mathcal{O}(n^2K)$  \\
  MMSB & $\mathcal{O}(K^2L)$ & $\mathcal{O}(n^2K)$  \\
  GP-RM & $\mathcal{O}(n^3)$ & -- \\ 
  \hline
\end{tabular}
\label{table:model_complexity}
\end{table}

\subsection{Computational complexities}
Table~\ref{table:model_complexity} compares the computational complexities of the ISG and the LFSG against representative existing models, including the SBM, the MMSB and the GP-RM. In terms of calculating the intensity for generating the relations $\{R_{ij}\}_{i,j=1}^n$, the ISG requires a scale of $\mathcal{O}(n^2K^2)$ since the calculation of  the mixture intensity for each relation involves a pair of coordinates~(giving a total of $n^2$) and all of the block intensities~(which is  $K^2$). However,  the uncoupling strategy applied in the LFSG lowers this cost dramatically to $\mathcal{O}(K^2L)$, where $L$ is the number of positive links~(i.e.~$R_{ij}=1$) observed in the data (Table \ref{rm_dataset} enumerates $L$ for each data set analysed below). Note that the mixture intensity computation cost of the LFSG is the same as 
\begin{figure*}[t]
\centering
\includegraphics[width = 0.9 \textwidth]{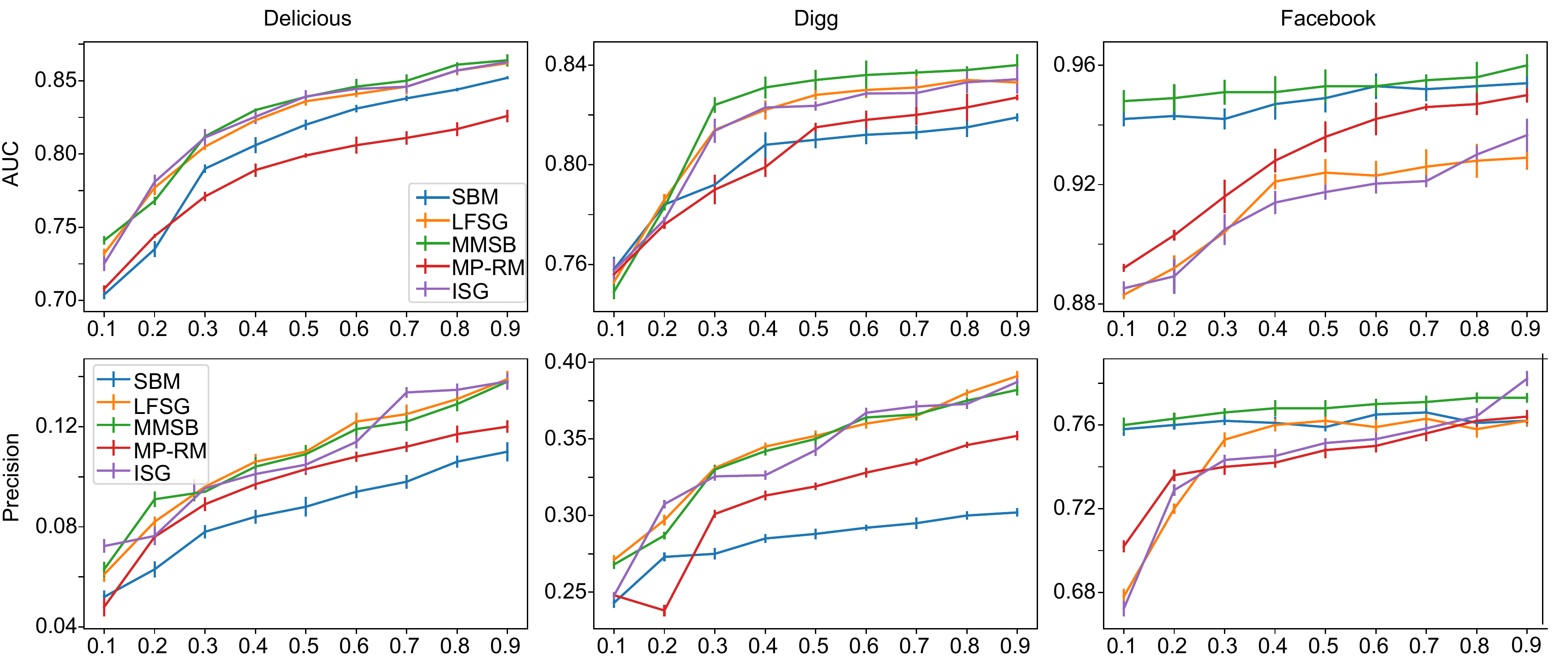}
\includegraphics[width = 0.9 \textwidth]{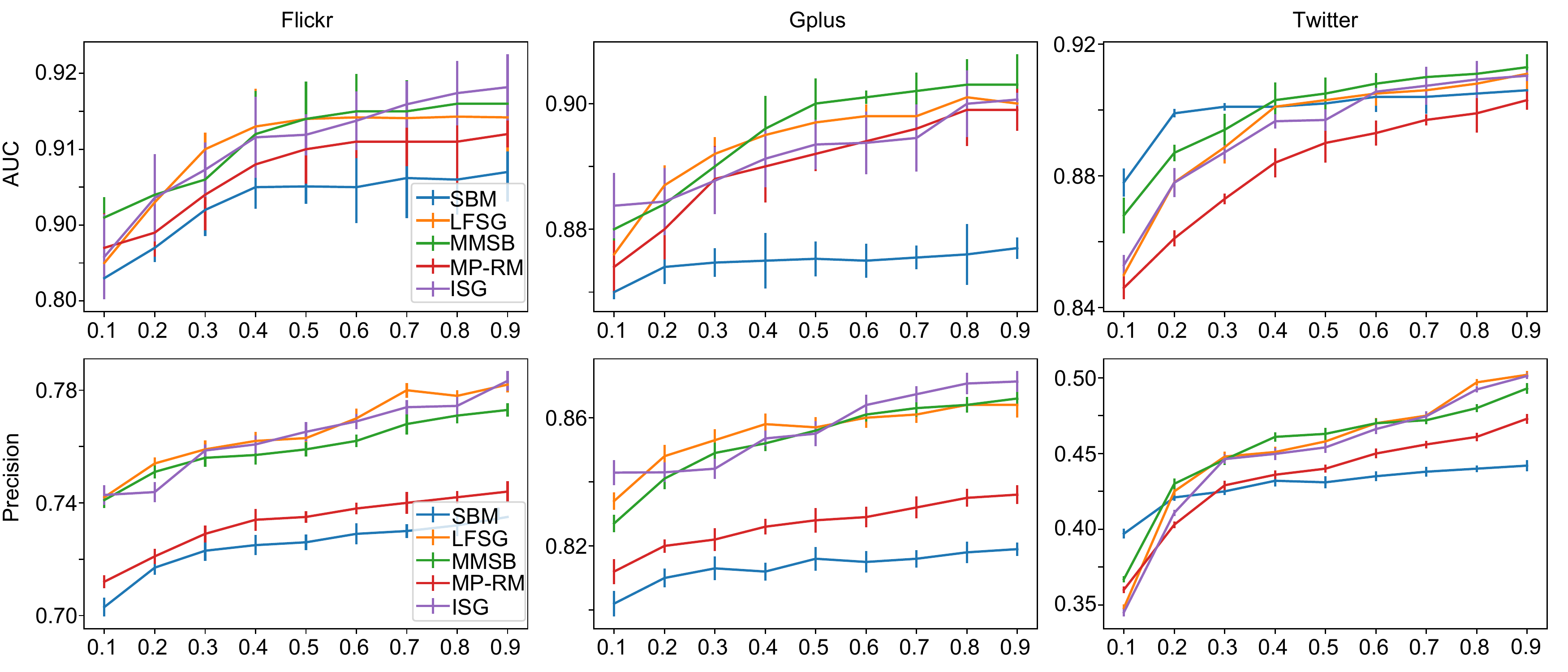}
\caption{Average area under the curve receiver operating characteristic~(AUC) and the precision recall~(Precision) under the Stochastic Block Model~(SBM, blue line), Mixed-membership Stochastic Block Model~(MMSB, green line), Mondrian Process-Relational Model~(MP-RM, red line), Latent Feature Smoothing Graphon on the SBM~(LFSG, orange line) and Integrated Smoothing Graphon on the SBM~(ISG, purple line) for each of the Delicious, Digg, Facebook, Flickr, Gplus and Twitter datasets, under different proportions of training data ($x$-axis).
}
\label{fig:auc_precision}
\end{figure*}
that of both the SBM and the MMSB. As a result, the continuous intensities of the LFSG compared to the discrete intensities of the SBM is achieved without sacrificing computation complexity. In contrast, the computational cost of computing the mixture intensity for the GP-RM is $\mathcal{O}(n^3)$, which is the highest among these methods, even though it can also provide continuous intensities. Regarding the complexity of sampling the labels, both the LFSG and the MMSB provide multiple labels for each node and incur the same cost of $\mathcal{O}(n^2K)$. However, while the SBM requires a smaller cost of $\mathcal{O}(nK)$ for label sampling, it only allows a single-label for each node. 

\section{Experiments}
We now evaluate the performance of the ISG-SBM and the LFSG-SBM on real-world data sets, comparing them with  three state-of-the-art methods: the SBM, the MP-RM and  the MMSB. Although the MMSB cannot be explained using graphon theory, it is included as it shares some similarities with with the LFSG-SBM. We implement posterior simulation for the SBM and the MMSB using Gibbs sampling  and  a conditional Sequential Monte Carlo algorithm\cite{andrieu2010particle,LakOryTeh2015ParticleGibbs,pmlr-v89-fan18a} for the MP-RM.

\begin{table}
\centering 
\caption{Dataset summary information}
\begin{tabular}{c|cc||c|cc}
  \hline
  Dataset & $L$ & $S(\%)$ & Dataset & $L$ & $S(\%)$\\
  \hline
  Delicious & $10,775$ & \phantom{1}$4.31 $ &   Gplus & $76,575$ & $30.63 $     \\
  Digg & $25,943$ & $ 10.38$ & Facebook & $54,476$ & $ 21.79$   \\
  Flickr & $49,524$ & $ 19.81$   &   Twitter & $24,378$ & $ \phantom{1}9.75$   \\
 \hline
\end{tabular}
\label{rm_dataset}
\end{table}
\subsection{Data sets} 

We examine six real-world exchangeable relational data sets: Delicious~\cite{Zafarani+Liu:2009}, Digg~\cite{Zafarani+Liu:2009}, Flickr~\cite{Zafarani+Liu:2009}, Gplus~\cite{leskovec2012learning}, Facebook~\cite{leskovec2012learning}, and Twitter~\cite{leskovec2012learning}. To construct the exchangeable relational data matrix we extract 
the top $1\,000$ active nodes based on  node interaction frequencies,
and then randomly sample $500$ nodes from these top $1\,000$ nodes to form the $500\times 500$ interaction binary matrix. Table~\ref{rm_dataset} summarizes the number of positive links~($L$) and the corresponding sparsity ($S\%$), which is defined as the ratio of the number of positive links to the total number of links, for each dataset.

\begin{figure*}[p]
\centering
\includegraphics[width = 0.49 \textwidth]{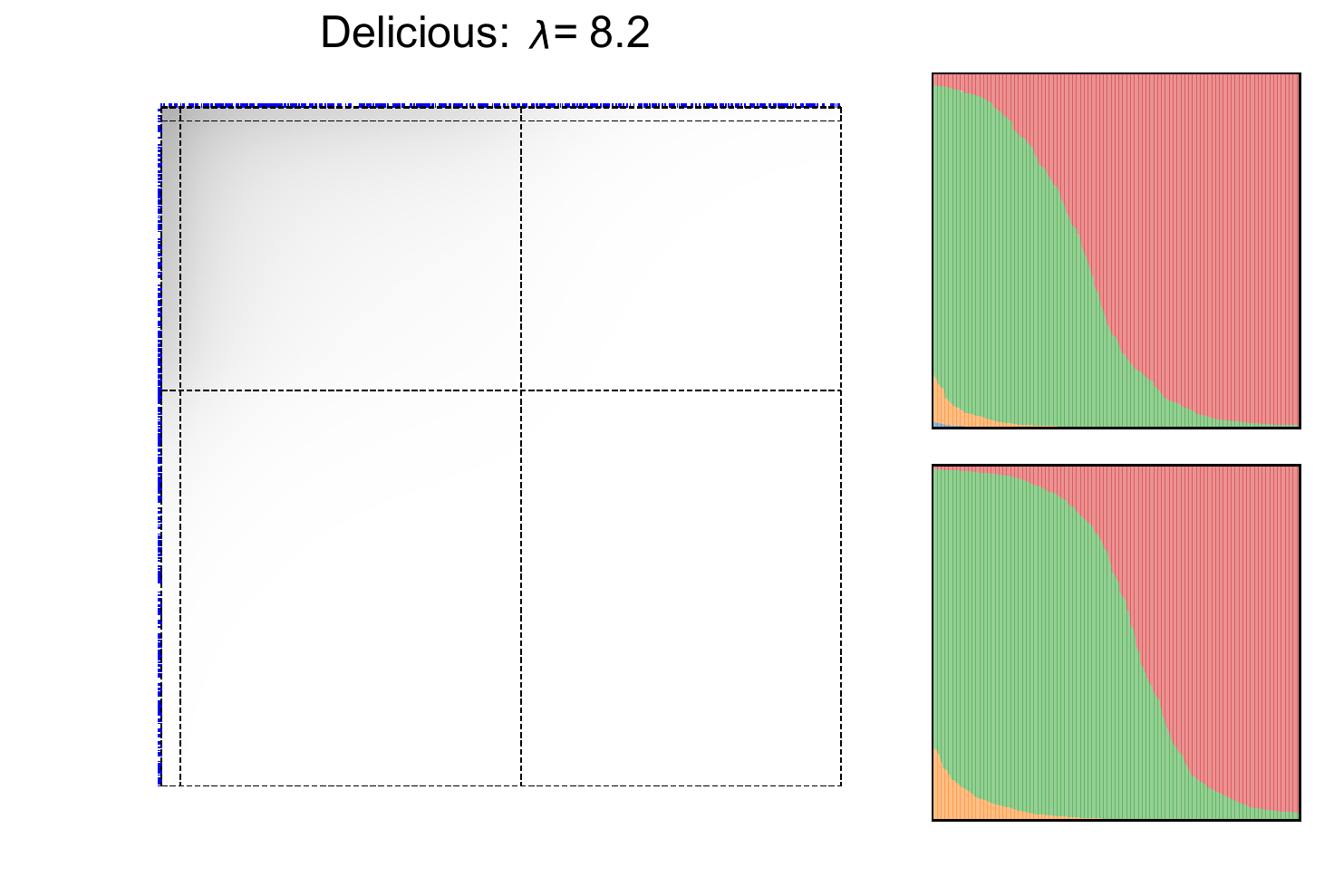}\quad
\includegraphics[width = 0.49 \textwidth]{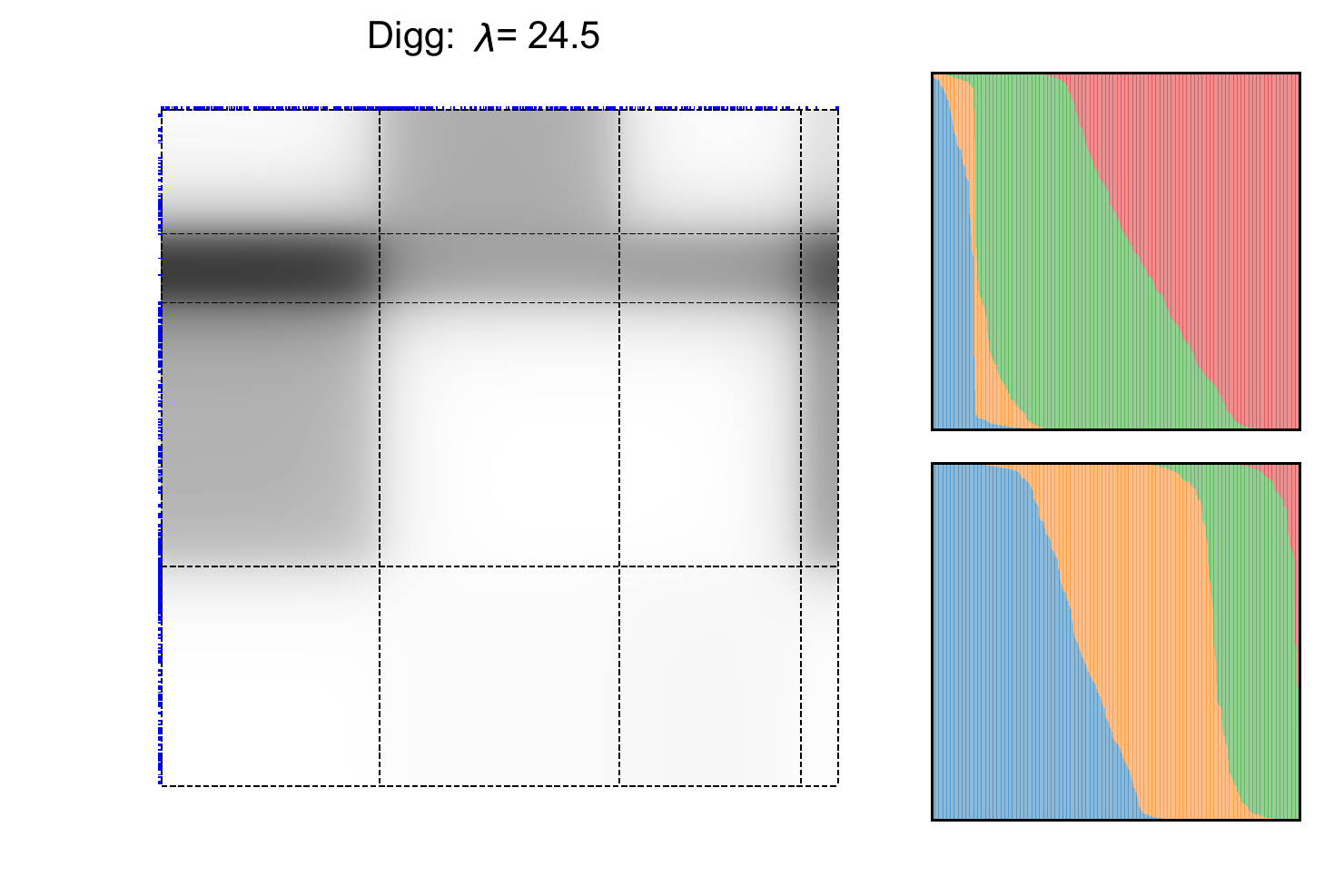}\quad
\includegraphics[width = 0.49 \textwidth]{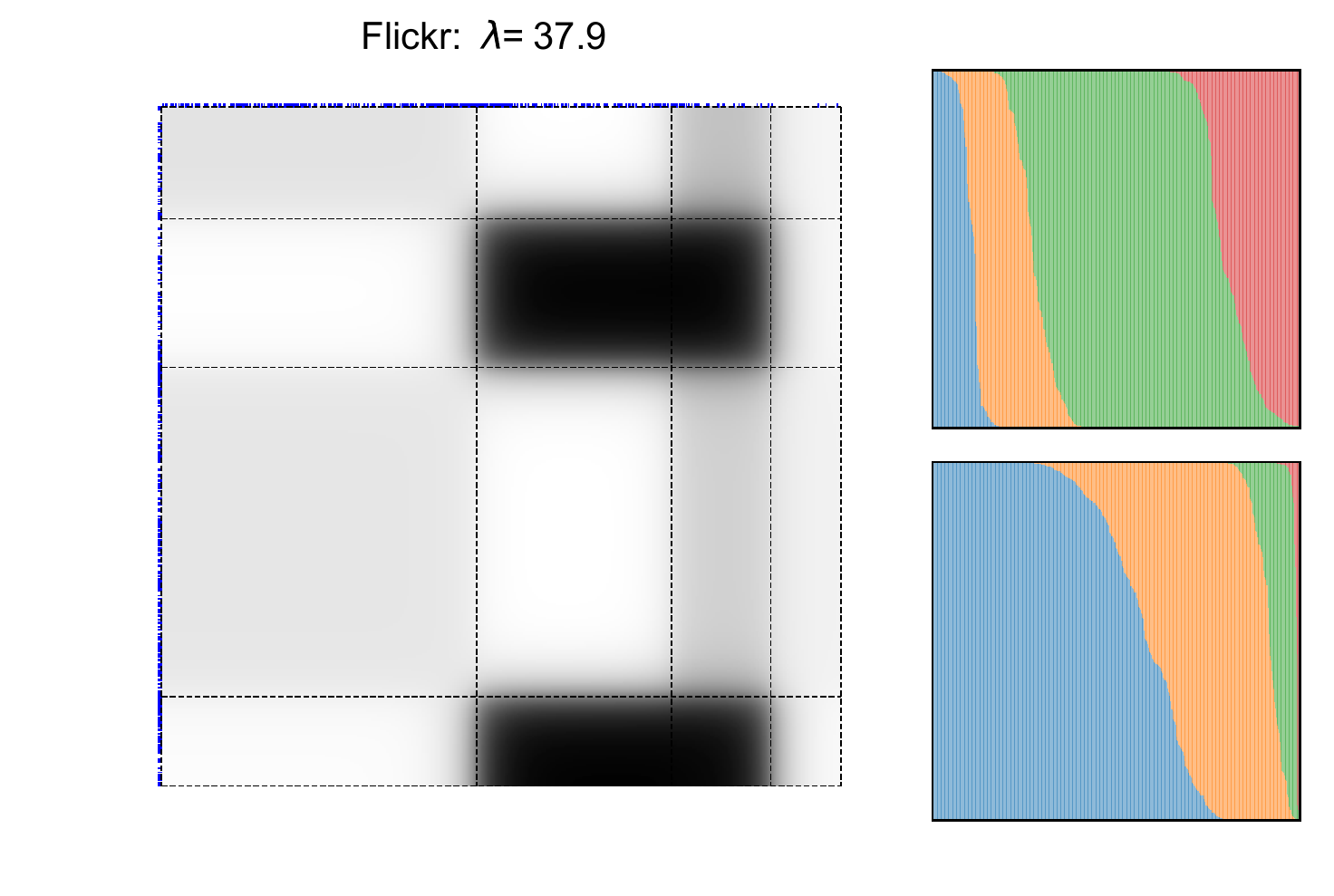}\quad
\includegraphics[width = 0.49 \textwidth]{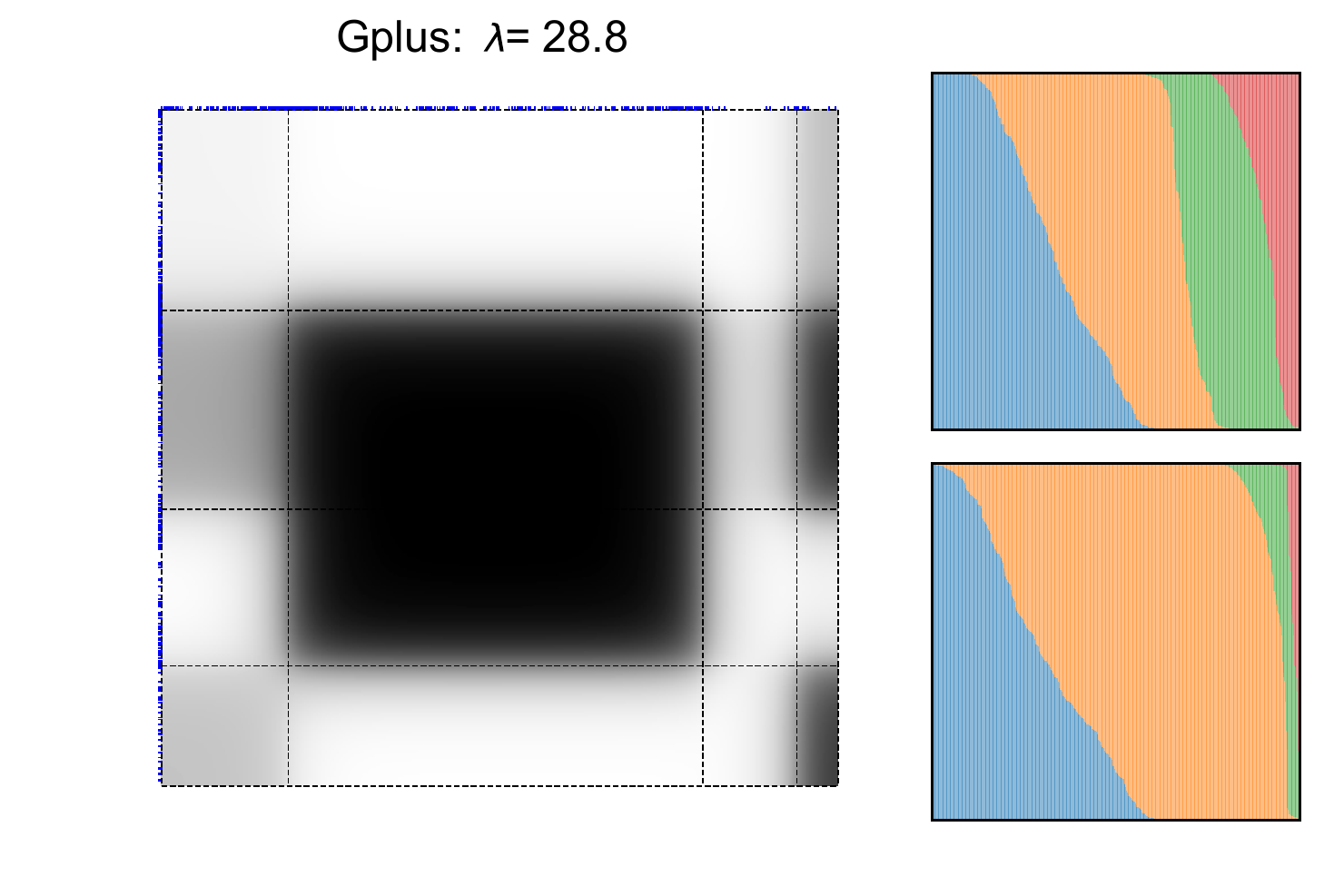}\quad
\includegraphics[width = 0.49 \textwidth]{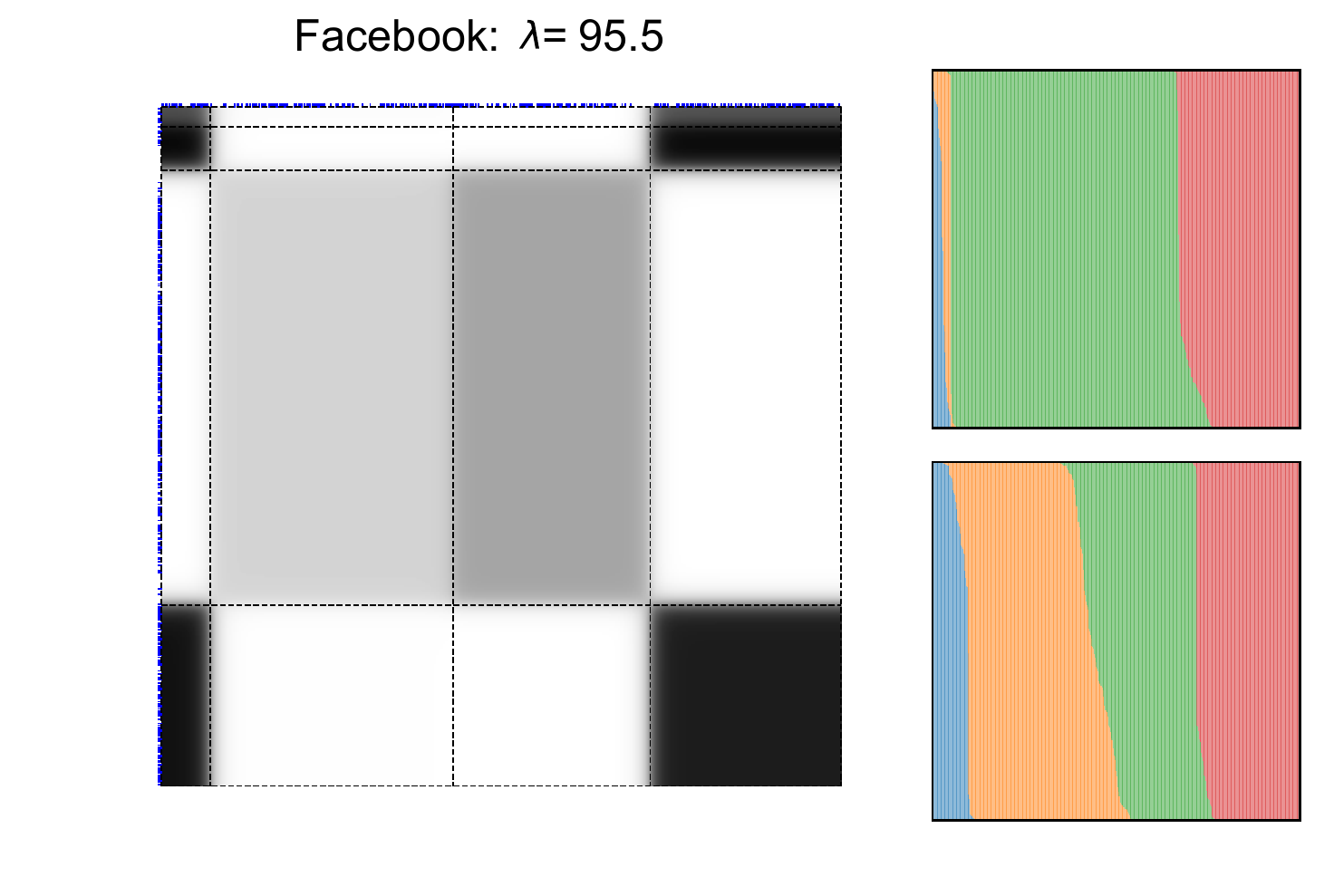}\quad
\includegraphics[width = 0.49 \textwidth]{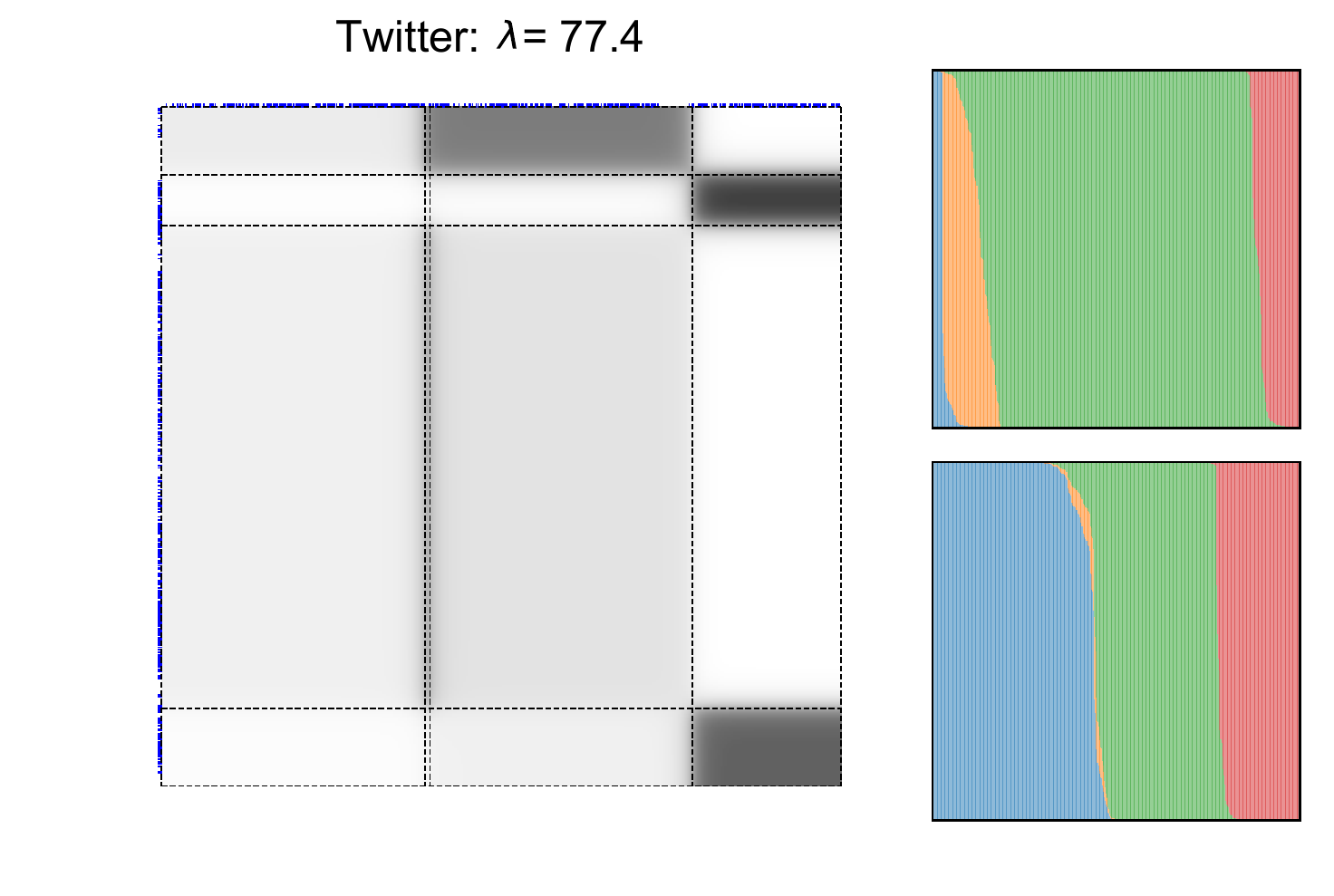}\quad
\caption{Visualisation of mixture intensity from one posterior draw, the posterior mean of smoothing parameter $\lambda$ and pairwise hidden labels on the Delicious, Digg, Flickr, Gplus, Facebook and Twitter datasets when implementing the Latent Feature Smoothing Graphon within the Stochastic Block Model. There are three figures for each dataset. Left: the grey level in the unit square illustrates the predicted mixture intensity for each relation (darker = higher intensity), the dotted lines indicate the related partition~($\boxplus=\pmb{\theta}^{(1)}\times\pmb{\theta}^{(2)}$), and the coordinates of all the nodes are visualised by the blue dots. 
Right: the different colours 
represent the different values of the latent labels $s_{ij}$~(right top) and $r_{ij}$~(right bottom), with the $x$-axis indicating different nodes (sorted by the ratio of the labels) and the $y$-axis showing the proportion of different labels for each node.}
\label{fig:graphon_visualization}
\end{figure*}

\subsection{Experimental setting} 
The hyper-parameters for each method are set as follows: for the SBM, LFSG-SBM, ISG-SBM, MMSB and MP-RM, the hyper-parameters $\alpha_0, \beta_0$ used in generating the block intensities are set as {$\alpha_0=S\%, \beta_0=1-S\%$}, where $S\%$ refers to the sparsity shown in Table~\ref{rm_dataset}, such that the block intensity has an expectation equivalent to the sparsity of the exchangeable relational data; for the SBM, LFSG-SBM, ISG-SBM and MMSB, we set the group distribution of $\pmb{\theta}$ as $\text{Dirichlet}(\pmb{1}_{1\times 4})$. Hence, the number of groups in each dimension in these models is set as $4$, with a total of $16$ blocks generated in the unit square; for the MP-RM, the budget parameter is set to $3$, which suggests that approximately $(3+1)\times(3+1)$ blocks would be generated.

\subsection{Link prediction performance}
The performance of each models in the task of link prediction is shown in Fig.~\ref{fig:auc_precision}, which reports both the average area under the curve of the receiver operating characteristic~(AUC) and the precision-recall~(Precision).
The AUC denotes the probability that the model will rank a randomly chosen positive link higher than a randomly chosen zero-valued link. The precision is the average ratio of correctly predicted positive links to the total number of predicted positive links. Higher values of AUC and precision indicate better model performance. 
For each dataset, we vary the ratio of training data from $10\%$ to $90\%$ and use the remainder for testing. The training/test data split is created in a row-wise manner. In particular, we take the same ratio of training data from each row of the relational matrix, so that each node shares the same amount of training data. 

From Fig.~\ref{fig:auc_precision}, 
 both the AUC and precision of all models improves as the amount of training data increases.
 The trend generally becomes steady when the proportion is larger than $0.3$, indicating 
 the amount of data required to fit a model with a $\sim$16-block complexity.

Except for the Facebook data, we can see that the AUC and precision of both the ISG-SBM and the LFSG-SBM are better than for the piecewise-constant graphon models (i.e.~the SBM and MP-RM). The proposed smoothing graphons typically achieve similar performance  to the MMSB,  demonstrating that the smoothing graphon strategy
is useful for improving model performance. 
For the Facebook dataset, the SBM seems to perform better than the smoothing graphon-based models. This is examined in greater detail in the next section.

\subsection{Graphon and hidden label visualisation}

In addition to the quantitative analysis, we visualise the generated graphons and hidden labels under the LFSG-SBM on all six data sets in Fig.~\ref{fig:graphon_visualization}. For each dataset, we visualise the resulting mixture intensities for one posterior sample, with the learned posterior mean of the smoothing parameter $\lambda$, based on using $90\%$ training data.
We observe that the displayed graphon intensities exhibit smooth transitions between blocks for each dataset, highlighting
that continuous, rather than discrete, mixture intensity values are generated under the smoothing graphon.
%
%
The transition speed of the intensity between blocks is influenced by the smoothing parameter $\lambda$ -- a larger value of $\lambda$ leads to a less smooth graphon, and a smaller value of $\lambda$ to a more smooth graphon --  similar to that observed in Fig.~\ref{fig:weight_function_vis}.

In Fig.~\ref{fig:graphon_visualization}, for each dataset, we also display the posterior proportions of the pairwise hidden labels $s_{ij}$ (top right) and $r_{ij}$ (bottom right) for each node. Here the $x$-axis indicates different nodes~(sorted by the label probabilities) and the $y$-axis displays the {posterior mean} of label probabilities~(each label represented by a different colour). For each node $i$ on the $x$-axis, the more colours observed on the $y$-axis indicates a greater diversity of groups associated with that node,
which in turn represents a higher potential for that node to belong to different groups when interacting with other nodes. In other words, the larger the tendency away from vertical line transitions between groups in these plots means
a larger number of nodes belonging to multiple groups. 

Compared with the value of the smoothing parameter $\lambda$ learned on the other four data sets, the values of $\lambda$ estimated from the Facebook and Twitter datasets are larger. Further, the visualisations of the hidden labels for these two data sets are partitioned by almost straight vertical lines, which suggests that only one label is a realistic possibility for most of the nodes. This could explain why both the AUC and precision values of the ISG-SBM and the LFSG-SBM are less competitive with those of the SBM on these two datasets (Fig.~\ref{fig:auc_precision}). Here, that the SBM assigns each node to exactly one group only,  which aligns well with the ground-truth for these two datasets.
%

Another explanation for the performance on the datasets of Facebook and Twitter is that we can recover the SBM if and only if $\lambda= \infty$. For any finite value of the smoothing parameter $\lambda$, it is impossible to have any posterior mass on the SBM. To this end, we might use a mapping to map $\lambda$ from $(0, \infty)\rightarrow (0, 1)$ (e.g. $1-e^{-\lambda}$) such that we are able to place posterior mass close to $1$. As the mapped value would be easily to approximate $1$, the models of ISG-SBM and LFSG-SBM could be able to perform at least as well as the SBM, even for the datasets of Facebook and Twitter. 


%

\section{Conclusion}
In this paper, we have introduced a smoothing strategy to modify conventional piecewise-constant graphons in order to increase their continuity. Through the introduction of a single smoothing parameter $\lambda$, we first developed the Integrated Smoothing Graphon~(ISG) that addresses the key limitation of existing piecewise-constant graphons which only generate a limited number of discrete  intensity values. To improve the computational efficiency of the ISG and to allow for the possibility of each node to belong to multiple groups, we further developed the Latent Feature Smoothing Graphon~(LFSG) by the introduction of auxiliary hidden labels. Our experimental results verify the effectiveness of this smoothing strategy in terms of greatly improved AUC and precision scores in the task of link prediction. The visualisations of the generated graphons and the posterior hidden label summaries further provide an intuitive understanding of the nature of the smoothing mechanism for the given dataset.


\bibliography{Xuhui_Machine_Learning}
\bibliographystyle{IEEEtran}

\end{document}